\documentclass[onefignum, onetabnum,onealgnum]{siamonline190516}
%
%

\usepackage[utf8]{inputenc} 
\usepackage[T1]{fontenc}    
\usepackage{hyperref}       
\usepackage{url}            
\usepackage{booktabs}       
\usepackage{nicefrac}       
\usepackage{microtype}      
\usepackage{graphicx}
\usepackage{mathtools}
\usepackage{atbegshi}
\usepackage{array}
\usepackage{longtable}
\usepackage{pdflscape}
\usepackage{color}
\usepackage{multirow}
\usepackage{tablefootnote}
\usepackage[title]{appendix}


\usepackage{lipsum}

\usepackage{endnotes}

\usepackage{titlesec}
\usepackage{titletoc}

\titleclass{\task}{straight}[\section]
\newcounter{task}
\renewcommand{\thetask}{\arabic{task}}
\titleformat{\task}[hang]
    {\normalfont\LARGE\bfseries}{Task \thetask:}{1em}{}

\titleformat*{\task}{\color{header1}\bfseries}

\titlecontents{task}
              [3.8em] 
              {}
              {\contentslabel{2.3em}}
              {\hspace*{-2.3em}}
              {\titlerule*[1pc]{.}\contentspage}

\titlespacing*{\section}{0ex}{1ex}{1ex}
\titlespacing*{\subsection}{0ex}{1ex}{1ex}
\titlespacing*{\paragraph}{0ex}{1ex}{1ex}
\titlespacing*{\subparagraph}{0ex}{1ex}{1ex}
\titlespacing*{\task}{0em}{1ex}{1ex}

\usepackage[sort&compress,comma,square,numbers]{natbib}

\usepackage{amsfonts,amsmath,amssymb}
\usepackage{bbm}

\usepackage{multicol}
\usepackage{enumitem}

\usepackage{hhline}

\usepackage{comment}
\usepackage{todonotes}
\RequirePackage{ulem}

\usepackage{booktabs}
\usepackage{titlesec}
\usepackage{fancyhdr}
\usepackage{fullpage}

\RequirePackage{titlesec}
\RequirePackage[font={small},{singlelinecheck=false}]{caption}
\usepackage[T1]{fontenc}
\usepackage[scaled]{helvet}
\usepackage[scaled=1.10, med]{zlmtt}

\usepackage{algorithm}
\usepackage{algpseudocode}








\newcommand{\argmax}{\operatornamewithlimits{argmax}}

\normalem

\hyphenation{data-sets}

\newcommand{\Real}{\mathbb{R}}
\newcommand{\Exp}{\mathbb{E}}
\newcommand{\II}{\mathbb{I}}
\newcommand{\LL}{\mathbb{L}}
\newcommand{\Xc}{\mathcal{X}}
\newcommand{\Yc}{\mathcal{Y}}
\newcommand{\Cc}{\mathcal{C}}
\newcommand{\Sc}{\mathcal{S}}
\newcommand{\Ig}{I_\mathcal{G}}
\newcommand{\Id}{I_\mathcal{D}}
\newcommand{\Iv}{I_\mathcal{V}}
\newcommand{\zij}{z_{ij}}
\newcommand{\nsk}{n_s^{(k)}}
\newcommand{\nsl}{n_s^{(l)}}
\newcommand{\ns}{n_s}
\newcommand{\gb}{g^*(X,X')}
\newcommand{\tgb}{\tilde{g}^*(X,X')}
\newcommand{\mb}{m^*(X)}
\newcommand{\mbp}{m^*(X')}
\newcommand{\Zm}{\mathbf{Z}}
\newcommand{\Xm}{\mathbf{X}}
\newcommand{\Qm}{\mathbf{Q}}
\newcommand{\Wm}{\mathbf{W}}
\newcommand{\Am}{\mathbf{A}}
\newcommand{\xvi}{\mathbf{x}_i}
\newcommand{\xvj}{\mathbf{x}_j}
\newcommand{\woo}{w_{11}}
\newcommand{\wot}{w_{12}}
\newcommand{\wto}{w_{21}}
\newcommand{\wtt}{w_{22}}

\newcommand{\Smerf}{{\sc \texttt{SMERF}}}
\newcommand{\Sm}{{\sc \texttt{SM}}}
\newcommand{\Rf}{{\sc \texttt{RF}}}
\newcommand{\Ma}{{\sc \texttt{Mahalanobis}}}
\newcommand{\Bi}{{\sc \texttt{Bilinear}}}
\newcommand{\Epm}{{\sc \texttt{EPM}}}
\newcommand{\Narm}{{\sc \texttt{NARM}}}
\newcommand{\dd}{{ \texttt{d}}}
\newcommand{\mm}{{ \texttt{min.parent}}}
\newcommand{\ff}{{ \texttt{FUN}}}
\newcommand{\Rmb}{{ \texttt{RandMatBinary}}}
\newcommand{\Rmr}{{ \texttt{RandMatRF}}}

\newcommand\defeq{\stackrel{\mathclap{\normalfont\mbox{def}}}{=}}

\newtheorem{thm}{Theorem}
\newtheorem{prop}{Proposition}
\newtheorem{lem}{Lemma}
\newtheorem{cor}{Corollary}
\newtheorem*{scor}{Scornet et al. Theorem 2}

\title{Robust Similarity and Distance Learning via Decision Forests}

\author{
    Tyler M. Tomita%
    \thanks{Department of Psychological and Brain Sciences,
        Johns Hopkins University (\email{ttomita@jhu.edu})}
    \and
    Joshua T. Vogelstein%
    \thanks{Department of Biomedical Engineering,
        Institute for Computational Medicine,
        Kavli~Neuroscience~Discovery Institute,
        Johns Hopkins University (\email{jovo@jhu.edu},
        \url{http://www.jovo.me/}).}
}

\begin{document}

\maketitle



\noindent
\textbf{%
Canonical distances such as Euclidean distance often fail to capture the appropriate relationships between items, subsequently leading to subpar inference and prediction. Many algorithms have been proposed for automated learning of suitable distances, most of which employ linear methods to learn a global metric over the feature space. While such methods offer nice theoretical properties, interpretability, and computationally efficient means for implementing them, they are limited in expressive capacity. Methods which have been designed to improve expressiveness sacrifice one or more of the nice properties of the linear methods. To bridge this gap, we propose a highly expressive novel decision forest algorithm for the task of distance learning, which we call Similarity and Metric Random Forests (\Smerf). We show that the tree construction procedure in \Smerf~is a proper generalization of standard classification and regression trees. Thus, the mathematical driving forces of \Smerf~are examined via its direct connection to regression forests, for which theory has been developed. Its ability to approximate arbitrary distances and identify important features is empirically demonstrated on simulated data sets. Last, we demonstrate that it accurately predicts links in networks.  
}

\section{Introduction}
Many machine learning and data mining tasks rely on a good similarity or distance metric which captures the appropriate relationships between items. The most notable examples are k-nearest neighbors (k-NN) and k-means. Having an appropriate distance is crucial in many applications such as classification, regression, clustering, information retrieval, and recommender systems. It also plays an important role in psychological science and cognitive neuroscience. For instance, one common psychology experiment is to have subjects rate similarities/distances between many pairs of items. The researcher then attempts to understand how features of the items drive the distance judgements \cite{tversky1977,oswal2016}.

Motivated by the above, many researchers have demonstrated how learning an appropriate distance can improve performance on a variety of tasks. \cite{xing2003,goldberger2004, weinberger2009, bar2005, davis2007} propose various ways of learning a Mahalanobis distance metric to improve clustering and k-NN prediction. \cite{oswal2016} learns a bilinear similarity for inference tasks in psychology and cognitive neuroscience. All of these methods have nice properties such as global optimality guarantees and interpretability, but they are limited in expressive capacity. Thus, methods have been proposed to address this. The random forest distance (RFD) \cite{xiong2012} views distance metric learning as a pairwise classification problem -- items in a pair are similar or they are not. This method is really a data transformation method, employing standard classification forests on the transformed data. This data transformation is costly, as it doubles the number of features and squares the number of data points that need to be partitioned, potentially leading to very deep trees for large training sample sizes. \cite{liu2019} proposes a neural similarity learning method for improving visual recognition with CNNs. Various similarities based on siamese networks have been proposed for tasks such as signature verifcation \cite{bromley1994} and one-shot learning \cite{koch2015}.

While the more expressive methods may demonstrate high accuracy on complex tasks, they lose computational efficiency and/or interpretability. Furthermore, many are tailored for specific structured problems, which limits flexibility. To bridge the gap, we propose a robust, interpretable, and computationally efficient Similarity and Metric Random Forests (\Smerf) for learning distances. Given an observed set of pairwise distances, \Smerf~trees attempt to partition the points into disjoint regions such that the average pairwise distances between points within each region is minimal. Our method directly generalizes standard classification and regression trees. Like classification and regression forests, \Smerf~is embarrassingly parallelizable, and unlike RFD, \Smerf~partitions $n$ points in $p$ dimensions rather than $n^2$ points in $2p$ dimensions. We show that \Smerf~can approximate a range of different notions of distances. Last, we demonstrate its flexibility and real-world utility by using it to predict links in networks.

\section{Similarity and Metric Random Forests}
\label{sec: 2}
Suppose we observe a set of points $\{\mathbf{x}_i\}_1^n \in \Xc \subseteq \Real^p$, along with a symmetric matrix $\Zm \in \Real^{n \times n}$ whose $(i,j)^{th}$ element $z_{ij}$ represents some notion of dissimilarity or distance between $\mathbf{x}_i$ and $\mathbf{x}_j$. We wish to learn a function $g(\mathbf{x},\mathbf{x}'): \Real^p \times \Real^p \rightarrow \Real$ which predicts the distance $z$ for a new pair of observations $\mathbf{x}$ and $\mathbf{x}'$.\footnote{Typically a distance or dissimilarity $z$ is nonnegative, but \Smerf~can operate on negative-valued distances (for example, a distance which is defined by taking the negative of a nonnegative similarity measure.} To this end, we introduce an ensemble decision tree-based method called Similarity and Metric Random Forests (\Smerf) (technically, it learns a semi-pseudometric). Starting at the root node of a tree, which is the entire input space $\Xc$, the training observations are partitioned into disjoint regions of the input space via a series of recursive binary splits. The orientation and location of each split is found by maximizing the reduction in average pairwise distance of points in the resulting child nodes, relative to the parent node. For convenience, we assume splits are made orthogonal to the axes of the input space, although in practice we allow arbitrarily oriented splits. Let $\Sc$ be a set of points at a particular split node of a tree and $n_s = |\Sc|$. The average pairwise distance $\Id$ is:
\begin{align*}
\Id(\Sc) = \frac{1}{\ns^2}\sum_{i,j \in \Sc}\zij
\end{align*}

Let $\eta = (j, \tau)$ denote the tuple of split parameters at a split node, where $j$ indexes a dimension to split and $\tau$ specifies where to split along the $j^{th}$ dimension. Furthermore, let $\Sc^L_{\eta} = \{i: \mathbf{x}_i^{(j)} \leq \tau, \forall i \in \Sc\}$ and $\Sc^R_{\eta} = \{i: \mathbf{x}_i^{(j)} > \tau, \forall i \in \Sc\}$ be the subsets of $\Sc$ to the left and right of the splitting threshold, respectively. $x^{(j)}$ denotes the $j^{th}$ dimension of $\mathbf{x}$. Denote by $n_L$ and $n_R$ the number of observations in the left and right child nodes, respectively. Then a split is made via:
\begin{align}
\label{eq: opt-smerf}
    \textstyle \eta^* = \argmax\limits_{\eta}\; \ns \Id(\Sc) - n_L \Id(\Sc^L_{\eta}) - n_R \Id(\Sc^R_{\eta})
\end{align}
Eq.~(\ref{eq: opt-smerf}) finds the split that maximally reduces the average pairwise distance of points in the child nodes, relative to that of the parent node. This optimization is performed exhaustively. Nodes are recursively split until a stopping criterion has been reached, either a maximum depth or a minimum number of points in a node. The end result is a set of leaf nodes, which are disjoint regions of the feature space each containing one or more training points. An ensemble of $B$ randomized trees are constructed, where randomization occurs via the following two procedures: 1) using a random subsample or bootstrap of the training points for each tree and 2) restricting the search in Eq.~(\ref{eq: opt-smerf}) over a random subsample of the input feature dimensions. 

In order to predict the distance for a new pair of points, a particular notion of distances between all pairs of leaf nodes is computed, which is defined in the following.\footnote{The pairwise leaf node distance we adopt is one of many possible sensible distances.} Let $l_{b,a}$ be the $a^{th}$ leaf node of the $b^{th}$ tree, and let $\Sc(l_{b,a}) = \{i: \mathbf{x}_i \in l_{b,a} \forall i \in [n]\}$ be the subset of the training data contained in $l_{b,a}$. The distance between leaves $l_{b,a}$ and $l_{b,a'}$ is
\begin{align}
    \label{eq: treepredict}
    h(l_{b,a}, l_{b,a'}) = \frac{1}{|\Sc(l_{b,a})||\Sc(l_{b,a'})|}\sum_{i \in \Sc(l_{b,a})}\sum_{j \in \Sc(l_{b,a'})}\zij.
\end{align}

$\mathbf{x}$ and $\mathbf{x'}$ are passed down the tree until they fall into a leaf node. Their predicted distance is simply the distances of the leaves that $\mathbf{x}$ and $\mathbf{x'}$ fall into. Letting $l_b(\mathbf{x})$ and $l_b(\mathbf{x}')$ be the leaf nodes where $\mathbf{x}$ and $\mathbf{x'}$ fall into for the $b^{th}$ tree, the $b^{th}$ tree prediction is $h(l_b(\mathbf{x}),l_b(\mathbf{x}'))$. The prediction made by the ensemble of trees is the average of the individual tree predictions:
\begin{align}
    \label{eq: forestpredict}
g(\mathbf{x},\mathbf{x}') = \frac{1}{B} \sum_{b=1}^B g_b(\mathbf{x},\mathbf{x}') = \frac{1}{B} \sum_{b=1}^B h(l_b(\mathbf{x}),l_b(\mathbf{x}')) 
\end{align}

The utility of $\Smerf$ is diverse. For instance, $\Zm$ can represent distances between items for information retrieval, or it can represent links between nodes in a network. For knowledge discovery, one can use standard computationally efficient tree-based variable importance methods, enabling identification of features driving the observed distances/dissimilarities between points. In classification, regression, and clustering, one can learn distances to improve k-NN or k-means.

\section{\Smerf~Generalizes Classification and Regression Trees}
We show that the classification and regression tree procedures of Breiman et al \cite{breiman1984} are special cases of \Smerf~trees, instantiated by specifying particular notions of pairwise distances. Both classification and regression trees recursively split the training points by optimizing a split objective function. The classification tree procedure constructs a tree from points $\{\mathbf{x}_i\}_1^n \in \Xc \subseteq \Real^p$ and associated class labels $\{c_i\}_1^n \in \Cc = \{1,\ldots,K\}$. The Gini impurity $I_G$ for a tree node sample $\Sc$ is defined as
\begin{align*}
\Ig(\Sc) = 1 - \sum_{k=1}^K f_k^2,
\end{align*}
where $f_k = \frac{1}{\ns}\sum_{i \in \Sc} \II[c_i = k]$ is the fraction of points in $\Sc$ whose class label is $k$. A classification tree finds the optimal orientation and location to split the training points using the following optimization:
\begin{align}
    \label{eq: opt-class}
    \textstyle \eta^* = \argmax\limits_{\eta}\; \ns \Ig(\Sc) - n_L \Ig(\Sc^L_{\eta}) - n_R \Ig(\Sc^R_{\eta})
\end{align}
This equation is a special case of Eq.~(\ref{eq: opt-smerf}), when the pairwise distance $\zij$ is defined as the indicator of points $i$ and $j$ belonging to different classes.

\begin{prop}
    Let $\mathcal{T}_{class}$ be a classification tree constructed from $\{\mathbf{x}_i\}_1^n$ and class labels $\{c_i\}_1^n$ using Eq.~(\ref{eq: opt-class}). Let $\Zm$ be the pairwise distance matrix whose element $\zij = \II[c_i \neq c_j]$. The tree $\mathcal{T}_{\Smerf}$ constructed from $\{\mathbf{x}_i\}_1^n$ and $\Zm$ using Eq.~(\ref{eq: opt-smerf}) is equivalent to $\mathcal{T}_{class}$.
\end{prop}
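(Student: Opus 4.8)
The plan is to reduce the entire claim to a single pointwise identity between the two split objectives, and then propagate that identity through the recursive tree construction. The heart of the matter is to show that, under the distance choice $\zij = \II[c_i \neq c_j]$, the average pairwise distance $\Id$ coincides exactly with the Gini impurity $\Ig$ on \emph{every} node sample $\Sc$.

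First I would establish this identity by a counting argument. For a node sample $\Sc$ with $\ns$ points, the double sum $\sum_{i,j \in \Sc}\zij$ counts the ordered pairs $(i,j)$ whose class labels differ. The number of ordered pairs both lying in class $k$ is $(\ns f_k)^2$, so the number of same-class ordered pairs is $\ns^2 \sum_k f_k^2$ and the number of differing-class ordered pairs is $\ns^2 - \ns^2\sum_k f_k^2$. Dividing by $\ns^2$ gives
\begin{align*}
\Id(\Sc) = \frac{1}{\ns^2}\sum_{i,j\in\Sc}\zij = 1 - \sum_{k=1}^K f_k^2 = \Ig(\Sc),
\end{align*}
which is the desired identity. (Note the diagonal terms $z_{ii} = \II[c_i \neq c_i] = 0$ are correctly counted as same-class, so no separate treatment is needed.)

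Next, since $\Id(\Sc) = \Ig(\Sc)$ holds for arbitrary $\Sc$---in particular for the parent node $\Sc$ and both candidate children $\Sc^L_\eta, \Sc^R_\eta$ at any split---the two split objectives in Eq.~(\ref{eq: opt-smerf}) and Eq.~(\ref{eq: opt-class}) are the \emph{same function} of the split parameter $\eta$. Consequently $\argmax_\eta$ returns the same optimizer under any fixed, commonly applied tie-breaking rule, so the two procedures select an identical split at the root.

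Finally I would close the argument by induction on the recursive construction. Because the chosen split agrees, the two trees produce identical left and right child samples; applying the pointwise identity again to those child samples shows the objectives agree there as well, so the next splits match, and so on. Since the node samples are identical at every stage, the stopping criteria (maximum depth, minimum node size) trigger at exactly the same nodes, whence the two trees induce the same partition of $\Xc$ and are equivalent. I expect the only delicate point to be bookkeeping rather than mathematics: one must verify that it is equality of the \emph{objective functions}---not merely of a single optimizer---that keeps the recursion synchronized, which is what licenses consistent handling of ties and of the stopping rule throughout the construction.
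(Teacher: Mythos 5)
Your proposal is correct and takes essentially the same route as the paper: both reduce the claim to the pointwise identity $\Id(\Sc) = \Ig(\Sc)$ under $\zij = \II[c_i \neq c_j]$, prove it by counting pairs, and then observe that equality of the split objectives on every node sample forces the two recursive constructions to coincide. The only difference is cosmetic: your complementary count of same-class ordered pairs gives $\Id(\Sc) = 1 - \sum_{k=1}^K f_k^2$ in one step, whereas the paper counts differing-class unordered pairs as $\sum_{k=1}^{K-1}\sum_{l=k+1}^{K} \nsk \nsl$, doubles for symmetry, and then verifies equality with the Gini impurity by algebraic expansion.
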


Similarly, we claim a regression tree is a special case of a \Smerf~tree. A regression tree is constructed from $\{\mathbf{x}_i\}_1^n$ and associated continuous responses $\{y_i\}_1^n \in \Yc \subseteq \Real$. Denote by $\Iv(\Sc)$ the (biased) sample variance of the responses points $\Sc$. Specifically,
\begin{align*}
    \Iv(\Sc) = \frac{1}{\ns} \sum_{i \in \Sc} (y_i - \frac{1}{\ns} \sum_{i \in \Sc} y_i)^2
\end{align*}
A regression tree finds the best split parameters which maximally reduce the sample variance in the child nodes, relative to the parent node. Specifically, the optimization is
\begin{align}
    \label{eq: opt-reg}
    \textstyle \eta^* = \argmax\limits_{\eta}\; \ns \Iv(\Sc) - n_L \Iv(\Sc^L_{\eta}) - n_R \Iv(\Sc^R_{\eta})
\end{align}
This optimization is a special case of Eq.~(\ref{eq: opt-smerf}) when the distance of a pair of points is defined as one-half the squared difference in the responses of the points.

\begin{prop}
    Let $\mathcal{T}_{reg}$ be a regression tree constructed from $\{\mathbf{x}_i\}_1^n$ and responses $\{y_i\}_1^n$ using Eq.~(\ref{eq: opt-reg}). Let $\Zm$ be the pairwise distance matrix whose element $\zij = \frac{1}{2}(y_i - y_j)^2$. The tree $\mathcal{T}_{\Smerf}$ constructed from $\{\mathbf{x}_i\}_1^n$ and $\Zm$ using Eq.~(\ref{eq: opt-smerf}) is equivalent to $\mathcal{T}_{reg}$.
\end{prop}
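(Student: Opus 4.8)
The plan is to show that, under the distance $\zij = \tfrac{1}{2}(y_i - y_j)^2$, the \Smerf~split objective in Eq.~(\ref{eq: opt-smerf}) coincides with the regression split objective in Eq.~(\ref{eq: opt-reg}) at every node, and then argue by induction that the two recursive constructions yield the same tree. The crux is a pointwise identity between the two impurity functionals: for any node sample $\Sc$,
\begin{align*}
\Id(\Sc) = \frac{1}{\ns^2}\sum_{i,j \in \Sc} \tfrac{1}{2}(y_i - y_j)^2 = \Iv(\Sc).
\end{align*}

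First I would establish this identity via the standard expansion of the all-pairs squared difference. Writing $\bar{y} = \tfrac{1}{\ns}\sum_{i \in \Sc} y_i$ and expanding $(y_i - y_j)^2 = y_i^2 - 2 y_i y_j + y_j^2$, the double sum (over all $\ns^2$ ordered pairs, the diagonal contributing zero) reduces to $2\ns \sum_{i \in \Sc} y_i^2 - 2\big(\sum_{i \in \Sc} y_i\big)^2$. Dividing by $2\ns^2$ then yields $\tfrac{1}{\ns}\sum_{i \in \Sc} y_i^2 - \bar{y}^2$, which is exactly the biased sample variance $\Iv(\Sc)$. This is a routine calculation, but it is the technical heart of the argument.

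With the identity in hand, the objective in Eq.~(\ref{eq: opt-smerf}) equals the objective in Eq.~(\ref{eq: opt-reg}) term by term for every candidate split $\eta$, since the parent term $\ns\Id(\Sc)$, the left-child term $n_L\Id(\Sc^L_\eta)$, and the right-child term $n_R\Id(\Sc^R_\eta)$ each match their variance counterparts. Consequently the two optimizations have identical objective surfaces over the space of split parameters, so they share the same maximizer $\eta^*$ at any node, provided ties among equally optimal splits are resolved by the same convention.

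Finally I would close the argument by induction on the recursive partitioning, mirroring the argument for the classification case (Proposition 1). Both procedures begin at the same root node $\Xc$; assuming they have built identical partial trees down to some node, the preceding paragraph guarantees they select the same split there, producing identical children, and both halt under the same stopping rule (maximum depth or minimum node size). Hence every node---and therefore the full tree---agrees, giving $\mathcal{T}_{\Smerf} = \mathcal{T}_{reg}$. The only subtlety, rather than a genuine obstacle, is ensuring consistent tie-breaking and matching stopping criteria so that the induction goes through cleanly.
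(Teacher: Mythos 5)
Your proposal is correct and takes essentially the same route as the paper's own proof: reduce the claim to the pointwise identity $\Id(\Sc) = \Iv(\Sc)$ under $\zij = \tfrac{1}{2}(y_i - y_j)^2$, verify it by expanding the all-pairs squared difference into $\tfrac{1}{\ns}\sum_{i \in \Sc} y_i^2 - \bar{y}^2$, and conclude that the two split objectives agree at every node so the recursive constructions coincide. Your algebra (summing over all ordered pairs with a vanishing diagonal) is a slightly more streamlined version of the paper's computation, which separates out the $i \neq j$ terms, but the argument is the same.
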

Proofs for these propositions are in the appendix.

One implication of Propositions 1 and 2 is that different flavors of classification and regression trees can be constructed, simply by changing the notion of distance. Doing so is equivalent to changing the split objective function. For example, one can construct a more robust regression tree by defining the pairwise distance $\zij = |y_i - y_j|$.

\section{Examining \Smerf~Under a Statistical Learning Framework}
\label{sec: theory}
In this section, a statistical learning framework is developed, which will ultimately shed light on the mathematical driving forces of \Smerf. Suppose a pair of i.i.d. random vectors $X, X' \in [0,1]^p$ is observed, and the goal is to predict a random variable $Z \in \Real$ which represents some notion of distance between $X$ and $X'$. Formally, we wish to find a function $g(X,X')$ that minimizes $L(g) = \Exp[(g(X,X') - Z)^2]$. To this end, we assume a training sample $T_n = \{(X_i, X_j, Z_{ij}): i = 1,\ldots,n; j \geq i\}$ distributed as the prototype $(X,X',Z)$ is observed, where each $X_i$ is i.i.d.\footnote{$Z_{ij}$ and $Z_{ik}$ may be dependent, since they represent distances for two pairs which share the same sample $X_i$} Furthermore, assume both the distance and $g$ are symmetric, so that $Z_{ij} = Z_{ji}$ and $g(X,X') = g(X',X)$. The objective is to use $T_n$ to construct an estimate $g_n(X,X';T_n): [0,1]^p \times [0,1]^p \rightarrow \Real$ of the Bayes optimal distance function $\gb = \Exp[Z|X,X']$, which is the true but generally unknown minimizer of $L(g)$. For convenience in notation, we will omit $T_n$ from $g_n(X,X';T_n)$ when appropriate. An estimate $g_n(X,X')$ is said to be consistent if $g_n(X,X') \xrightarrow{P} \gb$.

We analyze consistency of procedures under a simple class of distributions over $(X,X',Z)$. Suppose there exists an additional response variable $Y \in \Real$ associated with each $X$, where $Y$ may be observed or latent within the training sample. This induces a distribution over the joint set of random variables $((X,Y),(X',Y'),Z)$. Assume the joint distribution of $((X,Y),(X',Y'),Z)$ is described by
\begin{align}
    \label{eq: reg}
    &Y = \sum_{j=1}^p m^{(j)}(X^{(j)}) + \epsilon, \quad \epsilon \sim N(0, \sigma^2), \quad m^{(j)}: [0,1] \rightarrow \Real \\
    \label{eq: Z}
    &Z = \frac{1}{2}(Y - Y')^2.
\end{align} 
According to Eqs.~(\ref{eq: reg}, \ref{eq: Z}), the distance is one-half the squared difference of an additive regression response variable. Under this model specification, it is straightforward to show that
\begin{equation}
\begin{aligned}
\label{eq: gb}
 \gb &= \frac{1}{2}(\mb - \mbp)^2 + \sigma^2 \\
 &= \tgb + \sigma^2,
\end{aligned}
\end{equation}
 where $\mb = \Exp[Y|X]$ is the Bayes optimal regression function for predicting $Y$ under squared error loss and $\tgb$ is the Bayes optimal distance predictor if $\epsilon$ in (\ref{eq: reg}) is a constant (see Appendix \ref{sec: bayes} for a detailed derivation). If $Y$ is observable in the training sample and (\ref{eq: reg}, \ref{eq: Z}) are assumed, then one obvious approach for constructing an estimate of $\gb$ would be to use $\tilde{T}_n = \{(X_1,Y_1),\ldots,(X_n,Y_n)\}$ to construct estimates of $\mb$ and $\sigma^2$, and plug them into (\ref{eq: gb}). Denoting by $m_n$ and $s_n = \frac{1}{n}\sum_{i=1}^n(m_n(X) - Y_i)^2$ the estimates of $\mb$ and $\sigma^2$, respectively, consider the estimate
 \begin{align}
 \label{eq: ghat-line1}
 \hat{g}_n(X,X') &= \frac{1}{2}(m_n(X) - m_n(X'))^2 + s_n \\
 \label{eq: ghat}
 &= \tilde{g}_n(X,X') + s_n.
 \end{align}

We can show that a consistent estimate of $\tgb$ exists, using random forests. Scornet et al. \cite{scornet2015} proved $\LL^2$ consistency of regression random forests in the context of the additive regression model described by Eqn.~(\ref{eq: reg}). This result, which we refer to as Scornet Theorem 2 (ST2), is reviewed in Appendix \ref{sec: st2}. We will build off of ST2 to analyze asymptotic performance of distance estimates $g_n(X,X')$ constructed using random forest procedures. First, we define more notation.

A regression forest estimate $m_{n,\Rf}$ is an ensemble of $B$ randomized regression trees constructed on training sample $\tilde{T}_n = \{(X_1,Y_1),\ldots,(X_n,Y_n)\}$, where the randomization procedure is the same as that defined for \Smerf~in Section \ref{sec: 2}. The goal is to minimize $\Exp[(m_{n,\Rf}(X;\tilde{T}_n) - Y)^2]$. Each tree is constructed from a subsample of the original $n$ training points. Denote by $a_n \in \{1,\ldots,n\}$ the specified size of this subsample, and denote by $t_n \in \{1,\ldots,a_n\}$ the specified number of leaves in each tree. A tree is fully grown if $t_n = a_n$, meaning each leaf contains exactly one of the $a_n$ points. The prediction of the response at query point $X$ for the $b^{th}$ tree is denoted by $m_{n,\Rf}(X; \Theta_b, \tilde{T}_n)$, where $\Theta_1,\ldots,\Theta_B$ are i.i.d random variables which are used to randomize each decision tree. The forest estimate is the average of the tree estimates
\begin{align*}
    m_{M,n,\Rf}(X;\Theta_1,\ldots,\Theta_B,\tilde{T}_n) = \frac{1}{B} \sum_{b=1}^B m_{n,\Rf}(X;\Theta_b,\tilde{T}_n)
\end{align*}
To make the analysis more tractable, we take the limit as $B \rightarrow \infty$, obtaining the infinite random forest estimate
\begin{align*}
    m_{n,\Rf}(X;\tilde{T}_n) = \lim\limits_{B \rightarrow \infty} m_{B,n,\Rf}(X;\Theta_1,\ldots,\Theta_B,\tilde{T}_n) = \Exp_\Theta[m_{n,\Rf}(X; \Theta,\tilde{T}_n)]
\end{align*}
Here, expectation is taken with respect to $\Theta$ conditioned on $T_n$ and $X$.
 
\begin{thm}
Suppose the conditions in ST2 are satisfied. Then the estimate $\tilde{g}_{n,\Rf}(X,X') = \frac{1}{2}(m_{n,\Rf}(X) - m_{n\Rf}(X'))^2 \xrightarrow{P} \tgb$
\end{thm}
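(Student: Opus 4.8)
The plan is to leverage Scornet Theorem 2 (ST2), which guarantees $\LL^2$ consistency of the regression forest estimate $m_{n,\Rf}$ for $\mb$, and then transfer this consistency through the continuous map $(u,v) \mapsto \tfrac{1}{2}(u-v)^2$ to the distance estimate $\tilde{g}_{n,\Rf}$. First I would record that ST2 asserts $\Exp[(m_{n,\Rf}(X) - \mb)^2] \to 0$ as $n \to \infty$, with the expectation taken over the joint law of the training sample and the query point. Since $\LL^2$ convergence implies convergence in probability (e.g.\ by Chebyshev's inequality), this yields $m_{n,\Rf}(X) \xrightarrow{P} \mb$. Because $X'$ is i.i.d.\ with $X$, shares its distribution, and is likewise independent of the training sample, ST2 applies verbatim to the second argument, giving $m_{n,\Rf}(X') \xrightarrow{P} \mbp$.

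Next I would combine the two convergences. Writing $U_n = m_{n,\Rf}(X) - \mb$ and $V_n = m_{n,\Rf}(X') - \mbp$, both $U_n \xrightarrow{P} 0$ and $V_n \xrightarrow{P} 0$, so their difference satisfies $U_n - V_n \xrightarrow{P} 0$. Crucially, convergence in probability \emph{to a constant} is preserved under addition regardless of any dependence between $U_n$ and $V_n$, which matters here because both estimates are built from the same shared training sample $\tilde{T}_n$. Equivalently, $m_{n,\Rf}(X) - m_{n,\Rf}(X') \xrightarrow{P} \mb - \mbp$.

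To finish, I would push this difference through the squaring operation. The cleanest route is the continuous mapping theorem for convergence in probability: since $t \mapsto \tfrac{1}{2}t^2$ is continuous, $\tfrac{1}{2}(m_{n,\Rf}(X) - m_{n,\Rf}(X'))^2 \xrightarrow{P} \tfrac{1}{2}(\mb - \mbp)^2 = \tgb$, which is exactly the claim. Alternatively, one can argue directly from the factorization $A_n^2 - A^2 = (A_n - A)(A_n + A)$ with $A_n = m_{n,\Rf}(X) - m_{n,\Rf}(X')$ and $A = \mb - \mbp$: the first factor is $o_P(1)$ by the previous step, while the second is $O_P(1)$ (bounded in probability, since each forest estimate converges in probability to a finite limit), and the product of an $o_P(1)$ term with an $O_P(1)$ term is $o_P(1)$.

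The main point requiring care is the squaring step. Because squaring is an unbounded map, one cannot simply carry the $\LL^2$ bound from ST2 through to an $\LL^2$ bound on $\tilde{g}_{n,\Rf}$ without additional moment control on the forest estimates; the argument is therefore conducted in the weaker topology of convergence in probability, where the continuous mapping theorem (equivalently, the $o_P \cdot O_P$ factorization) applies with no boundedness assumption. The only secondary subtlety is justifying that ST2, stated for a single query point, transfers to $X'$ and that the two resulting convergences may be summed despite sharing the training data; both follow because convergence in probability to a constant is stable under addition irrespective of dependence.
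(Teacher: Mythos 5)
Your proof is correct and follows essentially the same route as the paper: ST2 gives $\LL^2$ consistency, mean-square convergence implies convergence in probability, and the continuous mapping theorem transfers this to $\tilde{g}_{n,\Rf}(X,X') \xrightarrow{P} \tgb$. The only cosmetic difference is that the paper packages the CMT step as its Corollary 1, applying CMT once to the bivariate map $(a,b) \mapsto \frac{1}{2}(a-b)^2$, whereas you split it into a difference step followed by a squaring step (with an equivalent $o_P \cdot O_P$ factorization as backup); your explicit remarks on transferring ST2 to $X'$ and on the dependence induced by the shared training sample simply spell out what the paper leaves implicit.
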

\begin{proof}
By noting that convergence in mean square implies convergence in probability, the result follows directly from ST2 and Corollary 1 (see Appendix \ref{sec: lemma1-cor1}).
\end{proof}

So far, we have considered estimates $g_n(X,X')$ constructed from regression estimates $m_n(X)$. Such procedures require that $Y$ is accessible in the training sample, which is not always the case.
Now we assume $Y$ is latent, and thus $g_n(X,X')$ can only be constructed by observing each $(X_i,X_j,Z_{ij})$ in the training sample. In this case, we use \Smerf~to construct $g_{n,\Sm}(X,X')$. Under assumption (\ref{eq: Z}), Proposition 2 states that a \Smerf~tree constructed from $T_n = \{(X_i,X_j,Z_{ij}): i = 1,\ldots,n, j \geq i\}$ is identical to a regression tree constructed from $\tilde{T}_n = \{(X_i,Y_i): i = 1,\ldots,n\}$. Thus, we may examine the \Smerf~estimate $g_{n,\Sm}(X,X')$ in terms of the regression tree estimates $m_{n,\Rf}(X)$ and $m_{n,\Rf}(X')$, for which ST2 provides consistency.

\Smerf~ builds $B$ randomized trees constructed using Eq.~(\ref{eq: opt-smerf}). The prediction of the distance at query pair $(X, X')$ made by the $b^{th}$ tree is denoted by $g_{n,\Sm}(X,X';\Theta_b)$, where as before $\Theta_b$ is the randomization parameter for each tree. Now, assume trees are fully grown (meaning one point per leaf). By Proposition 2, the $k^{th}$ \Smerf~tree is equivalent to the $k^{th}$ fully grown regression tree constructed with the same randomization parameter. Having equivalence of leaves between the two trees, denote by $l_b(X_i)$ the leaf of the $b^{th}$ tree containing the single training point $X_i$. Similarly denote by $l_b(X)$ and $l_b(X')$ the leaves that $X$ and $X'$ fall into at prediction time. The $b^{th}$ fully grown regression tree makes the following prediction at query point $X$:
\begin{align}
    m_{n,\Rf}(X;\Theta_b) = \sum_{i=1}^n\II[l_b(X) = l_b(X_i)]Y_i.
\end{align}
That is, the tree makes prediction $Y_i$ when $X$ falls into the same leaf containing the single training point $X_i$. The $b^{th}$ fully grown \Smerf~tree makes the distance prediction at query pair $(X,X')$:
\begin{equation}
\begin{aligned}
    g_{n,\Sm}(X,X';\Theta_b) &= \sum_{i=1}^n\sum_{j=1}^n\II[l_b(X) = l_b(X_i)]\II[l_b(X') = l_b(X_j)]Z_{ij} \\
    &= \sum_{i=1}^n\sum_{j=1}^n\II[l_b(X) = l_b(X_i)]\II[l_b(X') = l_b(X_j)]\frac{1}{2}(Y_i - Y_j)^2.
\end{aligned}
\end{equation}
That is, the tree makes prediction $\frac{1}{2}(Y_i - Y_j)^2$ when $X$ falls into the leaf containing $X_i$ and $X'$ falls into the leaf containing $X_j$. Thus, the following relationship between $m_{n,\Rf}(X;\Theta_b,\tilde{T}_n)$, $m_{n,\Rf}(X';\Theta_b,\tilde{T}_n)$, and $g_{n,\Sm}(X,X';\Theta_b,T_n)$ holds:
\begin{align}
    \label{eq: smerf-rf}
    g_{n,\Sm}(X,X';\Theta_b,T_n) = \frac{1}{2}(m_{n,\Rf}(X;\Theta_b,\tilde{T}_n)) - m_{n,\Rf}(X';\Theta_b,\tilde{T}_n)))^2.
\end{align}

The \Smerf~estimate for $B$ trees is
\begin{align*}
    g_{B,n,\Sm}(X,X';\Theta_1,\ldots,\Theta_B,T_n) = \frac{1}{B}\sum_{b=1}^B g_{n,\Sm}(X,X';\Theta_b,T_n)
\end{align*}
As was done for regression random forests, we analyze the infinite \Smerf~ estimate 
\begin{align}
    \label{eq: smerf-inf}
    g_{n,\Sm}(X,X';T_n) = \Exp_\Theta[g_{n,\Sm}(X,X';\Theta,T_n)]
\end{align}

Substituting (\ref{eq: smerf-rf}) into (\ref{eq: smerf-inf}), we obtain:
\begin{equation}
\begin{aligned}
    \label{eq: gsm}
    g_{n,\Sm}(X,X';T_n) &= \Exp_\Theta[\frac{1}{2}(m_{n,\Rf}(X;\Theta_b,\tilde{T}_n)) - m_{n,\Rf}(X';\Theta_b,\tilde{T}_n)))^2] \\
    &= \frac{1}{2}\Exp_\Theta[m_{n,\Rf}(X;\Theta_b,\tilde{T}_n)) - m_{n,\Rf}(X';\Theta_b,\tilde{T}_n))]^2 \\
    &+ \frac{1}{2}Var_\Theta(m_{n,\Rf}(X;\Theta_b,\tilde{T}_n)) - m_{n,\Rf}(X';\Theta_b,\tilde{T}_n)) \\
    &= \frac{1}{2}(m_{n,\Rf}(X;\tilde{T}_n)) - m_{n,\Rf}(X';\tilde{T}_n)))^2 \\
    &+ \frac{1}{2}Var_\Theta(m_{n,\Rf}(X;\Theta_b,\tilde{T}_n)) - m_{n,\Rf}(X';\Theta_b,\tilde{T}_n)) \\
    &= \tilde{g}_{n,\Rf}(X,X') + s_{n,\Rf} \\
\end{aligned}
\end{equation}
Comparing the last line of (\ref{eq: gsm}) to (\ref{eq: ghat}), we see that $g_{n,\Sm}(X,X')$ is implicitly an estimate of the form $\hat{g}_n(X,X')$. Thus, \Smerf~estimates two contributions to $\gb$: The $\tilde{g}_{n,\Rf}$ term estimates one-half the squared deviation of $\Exp[Y|X]$ and $\Exp[Y'|X']$, while the $s_{n,\Rf}$ term estimates $\sigma^2$. By Theorem 1, we know that $\tilde{g}_{n,\Rf}(X,X') \xrightarrow{P} \tgb$. Thus, Lemma 1 (Appendix \ref{sec: lemma1-cor1}) tells us half of the work is done  in establishing consistency of $g_{n,\Sm}(X,X')$; the second half is establishing that $s_{n,\Rf} \xrightarrow{P} \sigma^2$. Unfortunately, this term is difficult to analyze due to the fact that $m_{n,\Rf}(X;\Theta_b,\tilde{T}_n)$ and $m_{n,\Rf}(X';\Theta_b,\tilde{T}_n)$ are not independent with respect to the distribution of $\Theta$. However, our own numerical experiments for various additive regression settings suggest that $s_{n,\Rf}$ gets arbitrarily close to $\sigma^2$ with large $n$ (Appendix \ref{sec: empirical-convergence}). Based on these empirical findings, we conjecture that $\Smerf$ is consistent under our framework. Confirmation of this is left for future work.

\section{Experiments}
\label{sec: experiments}

\subsection{Simulations for Distance Learning}
\label{sec: experiments-simulations}
We evaluate the ability of \Smerf~to learn distances in three very different simulated settings. \Smerf's performance is compared to two other methods which are designed to learn quite different notions of distances. The first method we compare to learns a symmetric squared Mahalanobis distance by solving the following optimization problem:
\begin{align*}
\min_\Wm \sum_{i,j} ((\mathbf{x}_i - \mathbf{x}_j)^T \Wm (\mathbf{x}_i - \mathbf{x}_j) - \zij)^2,\quad s.t.\, \Wm \succcurlyeq 0
\end{align*}
where $\mathbf{x}$ and $\zij$ are the same as in Section \ref{sec: 2}. This can be seen as a regression form of \cite{xing2003}. We refer to this method as \Ma. The second method we compare to learns a bilinear similarity \cite{chechik2010,oswal2016,kulis2012} via the optimization:
\begin{align*}
\min_\Wm ||\Xm^T \Wm \Xm - \Qm||_F^2,\quad s.t.\, \Wm \succcurlyeq 0,
\end{align*}
where $\Xm \in \Real^{n \times p}$ is the matrix form of $\{\mathbf{x}_i\}_1^n$ and $\Qm \in \Real^{n \times n}$ is a symmetric matrix whose element $q_{ij}$ is the similarity between $\mathbf{x}_i$ and $\mathbf{x}_j$. The learned matrix $\Wm$ can be viewed as a linear mapping to a new inner product space, such that the dot product between two points $\mathbf{x}_i$ and $\mathbf{x}_j$ after mapping to the new space is (hopefully) close to $q_{ij}$. We refer to this method as $\Bi$.

We implement $\Ma$ and $\Bi$ in Matlab using the CVX package with the Mosek solver, while $\Smerf$ was implemented in native R. In all three experiments, the number of training examples ranged from 20 to 320 and the number of test examples was 200. The dimensionality of the input space, $p$, for all three experiments is 20, but only the first two dimensions contribute to the distance and the other dimensions are irrelevant. Each experiment was repeated ten times. The experiments are described below, with additional details in Appendix \ref{sec: simulated-datasets}.

\textbf{Regression Distance} models the pairwise distances as the squared deviation between additive regression responses. Specifically, each dimension of each $\mathbf{x}_i$ is sampled i.i.d. from $U(0.1, 0.9)$. Then regression responses $\{y_i\}_1^n$ are computed according to
\begin{align}
    \label{eq: addreg}
    y_i = \frac{1}{2} (x_i^{(1)} + x_i^{(2)}) + \epsilon,
\end{align}
where $x_i^{(j)}$ is the $j^{th}$ dimension of $\mathbf{x}_i$ and $\epsilon \sim U(-0.1,0.1)$. The pairwise distance is defined as $\zij = (y_i - y_j)^2$, which is bounded in $[0,1]$. This boundedness was specified purposefully since \Bi~requires a similarity matrix $\Qm$ for training and testing. Thus, we have a natural conversion from distance to similarity using the formula $\Qm = \mathbf{1} - \Zm$.

\textbf{Bilinear Distance} models the similarities between two points as the product of their regression responses. Specifically, each dimension of each $\mathbf{x}_i$ is sampled i.i.d. from $U(0, 1)$. Regression responses $\{y_i\}_1^n$ are derived from (\ref{eq: addreg}), but with $\epsilon$ removed. Then similarity $q_{ij} = y_i y_j$, which is bounded in $[0,1]$. We obtain $\Zm = \mathbf{1} - \Qm$ as input to \Smerf~and \Ma.

\textbf{Radial Distance} models the distance between two points contained to the unit ball by the squared deviation of the vector norms of their first two dimensions. Specifically each $\mathbf{x}_i$ is uniformly distributed within the 20-dimensional unit ball. Letting $\mathbf{x}^{(1:2)}$ denote the first two dimensions of $\mathbf{x}$, the distance is $\zij = (|\mathbf{x}_i^{(1:2)}| - |\mathbf{x}_j^{(1:2)}|)^2$, which is bounded in $[0,1]$. Again, we obtain $\Qm$ as $\mathbf{1} - \Zm$.

\begin{figure}[t]
\centering
\centerline{\includegraphics[width=\columnwidth,trim={0in 0in 0 0},clip]{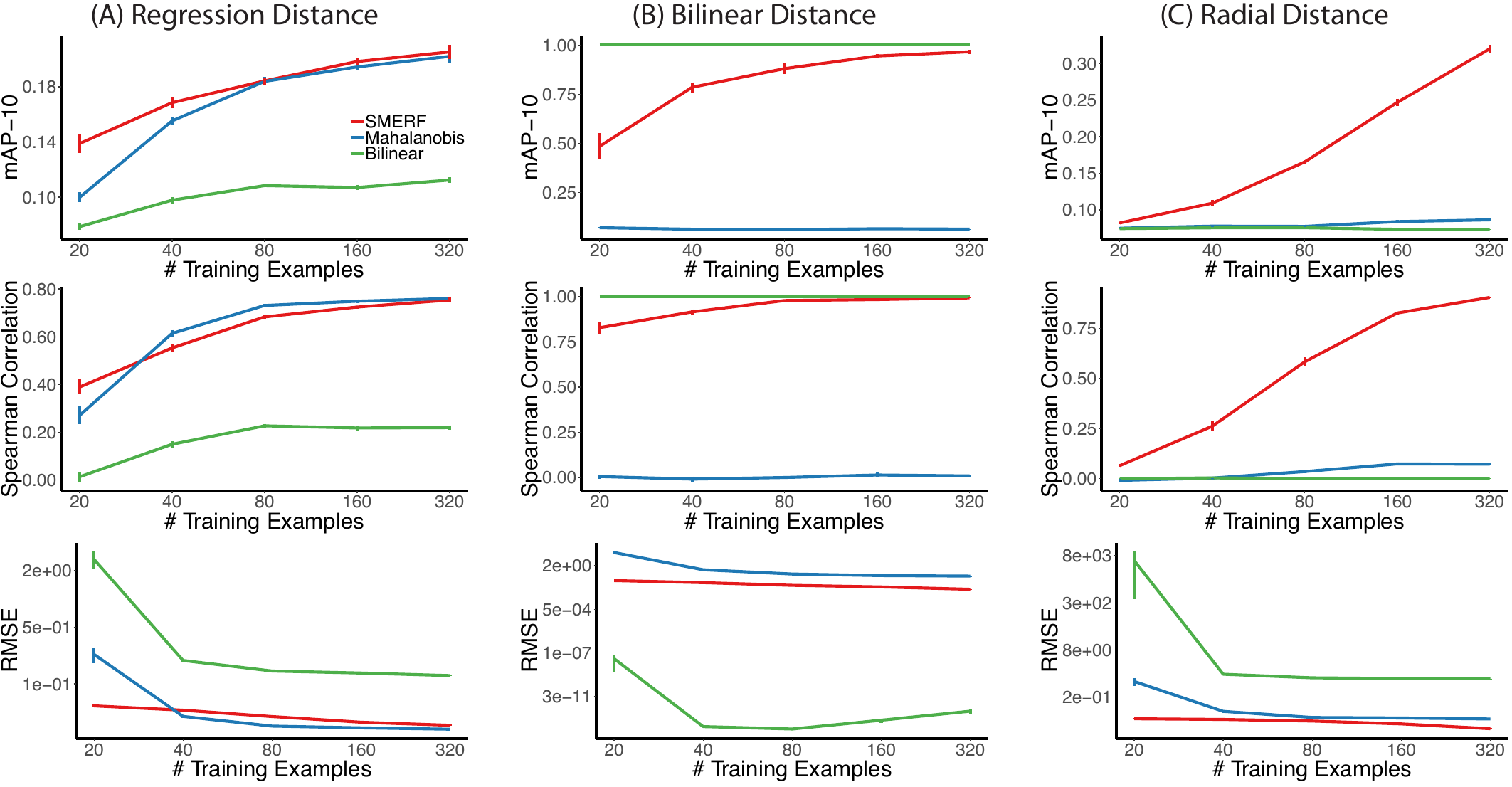}}
\caption{The mAP-10 (top row), Spearman Correlation (middle row), and RMSE (bottom row) performance measures on the three simulated experiments. Error bars represent SEM. \Smerf~is the only method that can learn all three distances reasonably well.}
\label{fig: sim}
\end{figure}

Figure \ref{fig: sim} shows three performance measures: 1) mAP-10 (top row) is the mean average precision, using the ten ground-truth closest points to each test point as the relevant items; 2) Spearman (middle row) is the average Spearman correlation between the predicted and ground-truth distances between each test point and every other point; 3) RMSE (bottom row) is the root-mean-squared-error between the predicted and ground-truth distances.

The Regression Distance experiment was designed specifically for \Ma~to perform well. The left column of Figure \ref{fig: sim} shows that \Bi~performs poorly in all three measures, while \Smerf~performs comparably to \Ma, perhaps even better for small sample sizes. The Bilinear Distance experiment was designed specifically for \Bi~to perform well. Similarly, we see in the middle column that \Ma~completely fails in all three measures while \Smerf~eventually performs comparably to \Bi. The right column shows that \Smerf~is the only method that can learn the Radial Distance. Furthermore, \Smerf~is able to correctly identify the first two dimensions as the important dimensions for the Radial Distance (Appendix \ref{sec: feature-importance} and Figure \ref{fig: feature-importance}). Overall, these results highlight the robustness and interpretability of \Smerf.

\subsection{Network Link Prediction}
\label{sec: experiments-networks}
Related to the notion of distances/similarities between items is the notion of interactions between items. Here, we demonstrate the flexibility and real-world utility of \Smerf~by using it to predict links in a network when node attribute information is available. We compare our method to two state-of-the-art methods for predicting links in a network. One is the Edge Partition Model (\Epm) \cite{zhou2015}, which is a Bayesian latent factor relational model. It is purely relational, meaning it does not account for node attributes. The second is the Node Attribute Relational Model (\Narm) \cite{zhao2017}, which builds off of \Epm~by incorporating node attribute information into the model priors. Thus, the methods span a gradient from only using network structural information (\Epm) to only using node attribute information (\Smerf). We compare the three methods on three real-world network data sets used in \cite{zhao2017}. \textbf{Lazega-cowork} is a network of cowork relationships among 71 attorneys, containing 378 links. Each attorney is associated with eight ordinal and binary attributes. \textbf{Facebook-ego} is a network of 347 Facebook users with 2419 links. Each user is associated with 227 binary attributes. \textbf{NIPS234} is a network of 234 NIPS authors with 598 links indicating co-authorship. Each author is associated with 100 binary attributes. Additional details regarding the network data sets and experimental setup can be found in Appendix \ref{sec: network-supplementary}.

For each data set, the proportion of nodes used for training (TP) was varied from 0.1 to 0.9 (the rest used for testing). We note that this is different from \cite{zhao2017}, in which the data was split by node-pairs, rather than by nodes. For EPM, however, we split the data by node-pairs. This is because, by not leveraging node attributes, it is hopeless in being able to predict links for newly observed nodes any better than chance. Experiments for each TP were repeated five times. For \Smerf, $\Zm$ was computed as $\mathbf{1} - \mathbf{A}$, where $\mathbf{A}$ is the adjacency matrix. Thus, \Smerf~predictions represent scores between zero and one reflecting the belief that a link exists. \Epm~and \Narm~explicitly model such beliefs. Thus, we use the area under the ROC (AUC-ROC) and Precision-Recall (AUC-PR) curves as measures of performance.

\begin{figure}[t]
\centering
\centerline{\includegraphics[width=\columnwidth,trim={0in 0in 0 0},clip]{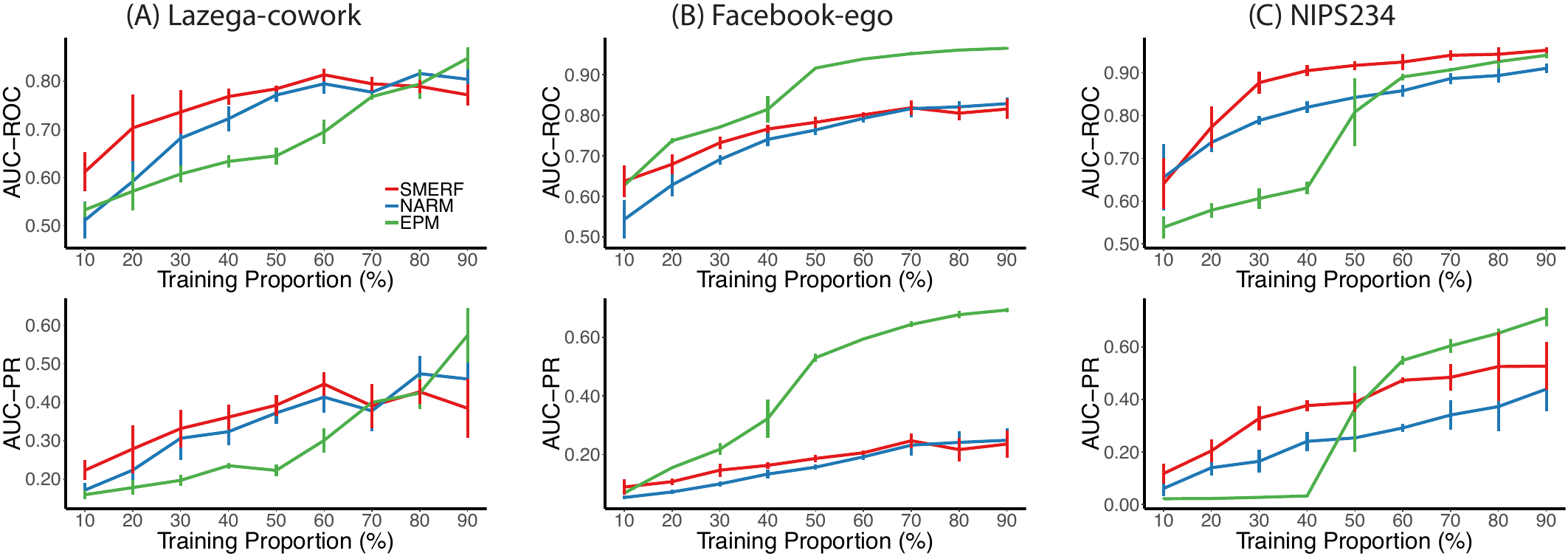}}
\caption{The AUC of the ROC (top row) and precision-recall curve (bottom row) for the three network data sets, as a function of the proportion of data used for training. Error bars represent the standard deviation over five replicate experiments. \Smerf~is competitive with state-of-the-art relational learning methods, performing the best in many settings.}
\label{fig: net}
\end{figure}

Figure \ref{fig: net} shows AUC-ROC (top) and AUC-PR (bottom) for the three networks. The left column indicates shows that for small TP on Lazega-cowork, \Smerf~outperforms \Narm~for AUC-ROC, while \Narm~eventually catches up. They perform comparably in terms of AUC-PR. Both methods outperform \Epm~until the TP equals 70\%. On Facebook-ego, \Epm~outperforms the other methods. \Smerf~performs slight better than \Narm~for small TP and comparably otherwise. On NIPS234, \Smerf~outperforms \Narm~in both measures for nearly all values of TP. \Smerf~substantially outperforms \Epm~for small TP, but \Epm~quickly begins to win once TP equals 50\%. Overall, the results indicate that the general purpose \Smerf, which disregards network structural information, is highly competitive with dedicated relational models for the purpose of link prediction.

\section{Conclusion}
We have presented a novel tree ensemble-based method called \Smerf~for learning distances, which generalizes classification and regression trees. Via its connection to regression forests, analysis of \Smerf~under a statistical learning framework is performed through the lens of regression forest estimates. We show that \Smerf~can robustly learn many notions of distance, and also demonstrate how it can be used for network link prediction. Future work will build off of theoretical work here for establishing consistency of \Smerf. Second, we will explore how different notions of distance constructed from labeled data can impact k-NN accuracy using the learned \Smerf~distance function. Third, \Smerf~will be extended to handle missing values in the distance matrix during training. Last, we will investigate ways to improve performance of \Smerf~for link prediction by leveraging the addition of network structural information.

\section{Acknowledgements}
Research was partially supported by funding from Microsoft Research.





\bibliography{smerf}

\begin{thebibliography}{18}
\providecommand{\natexlab}[1]{#1}
\providecommand{\url}[1]{\texttt{#1}}
\expandafter\ifx\csname urlstyle\endcsname\relax
  \providecommand{\doi}[1]{doi: #1}\else
  \providecommand{\doi}{doi: \begingroup \urlstyle{rm}\Url}\fi

\bibitem[Tversky(1977)]{tversky1977}
Amos Tversky.
\newblock Features of similarity.
\newblock \emph{Psychological review}, 84\penalty0 (4):\penalty0 327, 1977.

\bibitem[Oswal et~al.(2016)Oswal, Cox, Lambon-Ralph, Rogers, and
  Nowak]{oswal2016}
Urvashi Oswal, Christopher Cox, Matthew Lambon-Ralph, Timothy Rogers, and
  Robert Nowak.
\newblock Representational similarity learning with application to brain
  networks.
\newblock In Maria~Florina Balcan and Kilian~Q. Weinberger, editors,
  \emph{Proceedings of The 33rd International Conference on Machine Learning},
  volume~48 of \emph{Proceedings of Machine Learning Research}, pages
  1041--1049, New York, New York, USA, 20--22 Jun 2016. PMLR.
\newblock URL \url{http://proceedings.mlr.press/v48/oswal16.html}.

\bibitem[Xing et~al.(2003)Xing, Jordan, Russell, and Ng]{xing2003}
Eric~P Xing, Michael~I Jordan, Stuart~J Russell, and Andrew~Y Ng.
\newblock Distance metric learning with application to clustering with
  side-information.
\newblock In \emph{Advances in neural information processing systems}, pages
  521--528, 2003.

\bibitem[Goldberger et~al.(2005)Goldberger, Hinton, Roweis, and
  Salakhutdinov]{goldberger2004}
Jacob Goldberger, Geoffrey~E Hinton, Sam~T Roweis, and Russ~R Salakhutdinov.
\newblock Neighbourhood components analysis.
\newblock In \emph{Advances in neural information processing systems}, pages
  513--520, 2005.

\bibitem[Weinberger and Saul(2009)]{weinberger2009}
Kilian~Q Weinberger and Lawrence~K Saul.
\newblock Distance metric learning for large margin nearest neighbor
  classification.
\newblock \emph{Journal of Machine Learning Research}, 10\penalty0
  (Feb):\penalty0 207--244, 2009.

\bibitem[Bar-Hillel et~al.(2005)Bar-Hillel, Hertz, Shental, and
  Weinshall]{bar2005}
Aharon Bar-Hillel, Tomer Hertz, Noam Shental, and Daphna Weinshall.
\newblock Learning a mahalanobis metric from equivalence constraints.
\newblock \emph{Journal of Machine Learning Research}, 6\penalty0
  (Jun):\penalty0 937--965, 2005.

\bibitem[Davis et~al.(2007)Davis, Kulis, Jain, Sra, and Dhillon]{davis2007}
Jason~V Davis, Brian Kulis, Prateek Jain, Suvrit Sra, and Inderjit~S Dhillon.
\newblock Information-theoretic metric learning.
\newblock In \emph{Proceedings of the 24th international conference on Machine
  learning}, pages 209--216, 2007.

\bibitem[Xiong et~al.(2012)Xiong, Johnson, Xu, and Corso]{xiong2012}
Caiming Xiong, David Johnson, Ran Xu, and Jason~J Corso.
\newblock Random forests for metric learning with implicit pairwise position
  dependence.
\newblock In \emph{Proceedings of the 18th ACM SIGKDD international conference
  on Knowledge discovery and data mining}, pages 958--966, 2012.

\bibitem[Liu et~al.(2019)Liu, Liu, Rehg, and Song]{liu2019}
Weiyang Liu, Zhen Liu, James~M Rehg, and Le~Song.
\newblock Neural similarity learning.
\newblock In \emph{Advances in Neural Information Processing Systems}, pages
  5026--5037, 2019.

\bibitem[Bromley et~al.(1994)Bromley, Guyon, LeCun, S{\"a}ckinger, and
  Shah]{bromley1994}
Jane Bromley, Isabelle Guyon, Yann LeCun, Eduard S{\"a}ckinger, and Roopak
  Shah.
\newblock Signature verification using a" siamese" time delay neural network.
\newblock In \emph{Advances in neural information processing systems}, pages
  737--744, 1994.

\bibitem[Koch et~al.(2015)Koch, Zemel, and Salakhutdinov]{koch2015}
Gregory Koch, Richard Zemel, and Ruslan Salakhutdinov.
\newblock Siamese neural networks for one-shot image recognition.
\newblock In \emph{ICML deep learning workshop}, volume~2. Lille, 2015.

\bibitem[Breiman et~al.(1984)Breiman, Friedman, Stone, and Olshen]{breiman1984}
Leo Breiman, Jerome Friedman, Charles~J Stone, and Richard~A Olshen.
\newblock \emph{Classification and regression trees}.
\newblock CRC press, 1984.

\bibitem[Scornet et~al.(2015)Scornet, Biau, Vert, et~al.]{scornet2015}
Erwan Scornet, G{\'e}rard Biau, Jean-Philippe Vert, et~al.
\newblock Consistency of random forests.
\newblock \emph{The Annals of Statistics}, 43\penalty0 (4):\penalty0
  1716--1741, 2015.

\bibitem[Chechik et~al.(2010)Chechik, Sharma, Shalit, and Bengio]{chechik2010}
Gal Chechik, Varun Sharma, Uri Shalit, and Samy Bengio.
\newblock Large scale online learning of image similarity through ranking.
\newblock \emph{Journal of Machine Learning Research}, 11\penalty0
  (Mar):\penalty0 1109--1135, 2010.

\bibitem[Kulis et~al.(2012)]{kulis2012}
Brian Kulis et~al.
\newblock Metric learning: A survey.
\newblock \emph{Foundations and trends in machine learning}, 5\penalty0
  (4):\penalty0 287--364, 2012.

\bibitem[Zhou(2015)]{zhou2015}
Mingyuan Zhou.
\newblock Infinite edge partition models for overlapping community detection
  and link prediction.
\newblock In \emph{Artificial intelligence and statistics}, pages 1135--1143,
  2015.

\bibitem[Zhao et~al.(2017)Zhao, Du, and Buntine]{zhao2017}
He~Zhao, Lan Du, and Wray Buntine.
\newblock Leveraging node attributes for incomplete relational data.
\newblock In \emph{Proceedings of the 34th International Conference on Machine
  Learning-Volume 70}, pages 4072--4081. JMLR. org, 2017.

\bibitem[Breiman and Cutler()]{randomforest}
Leo Breiman and Adele Cutler.
\newblock Random forests.
\newblock URL
  \url{https://www.stat.berkeley.edu/~breiman/RandomForests/cc_home.htm}.

\end{thebibliography}
\bibliographystyle{unsrtnat}

\clearpage

\begin{appendices}

\section{Proofs of Propositions 1 and 2}
\subsection{Proof of Proposition 1}
Consider two trees partitioning a set of points $\{\mathbf{x}\}_1^n$, constructed with the same stopping criteria. Without loss of generality, assume the trees are deterministic. Then the trees will be identical if the split optimizations are identical. Thus, it suffices to show that if $\zij = \II[c_i \neq c_j]$, then $\Id$ in (\ref{eq: opt-smerf}) is identical to $\Ig$ in (\ref{eq: opt-class}).

First, compute $\Id(\Sc)$ for a sample of points $\Sc$. The sum of all pairwise distances $\zij$ is just the total number of pairs of points in $\Sc$ that don't share the same class label. Denote by $\ns^{(k)}$ the number of points in $\Sc$ whose class label is $k$. For any pair of distinct class labels $k,l \in \{1,\ldots,K\}$, there are $\nsk \nsl$ pairs of points whose members have labels that are either $k$ or $l$, but who do not share the same label. Thus, the total number of pairs of points in $\Sc$ not sharing the same label is $\sum_{k=1}^{K-1} \sum_{l=k+1}^K \nsk \nsl$. Since the distance matrix $\Zm_{\Sc}$ for the set $\Sc$ double counts each pair of distinct points in $\Sc$ (because it is symmetric), we multiply this sum by $2$. Noting that there are $\ns^2$ total pairs of points, the average pairwise distance is
\begin{align*}
    \Id(\Sc) = \frac{2}{\ns^2} \sum_{k=1}^{K-1} \sum_{l=k+1}^K \nsk\nsl
\end{align*}
We show that this expression is equivalent to $\Ig$:
\begin{align*}
    \Ig(\Sc) &= 1 - \sum_{k=1}^K f_k^2 \\
    &= 1 - \sum_{k=1}^K (\frac{\ns^{(k)}}{\ns})^2 \\
    &= \frac{1}{\ns^2}(\ns^2 - \sum_{k=1}^K \ns^{(k)2}) \\
    &= \frac{1}{\ns^2}((\sum_{k=1}^K \nsk)^2 - \sum_{k=1}^K \ns^{(k)2}) \\
    &= \frac{1}{\ns^2}(\sum_{k=1}^K \ns^{(k)2} + 2\sum_{k=1}^{K-1} \sum_{l=k+1}^K \nsk\nsl - \sum_{k=1}^K \ns^{(k)2}) \\
    &= \frac{2}{\ns} \sum_{k=1}^{K-1} \sum_{l=k+1}^K \nsk\nsl \\
    &= \Id(\Sc)
\end{align*}
where in the third to last equality we used the identity $(\sum_{k=1}^K \nsk)^2 = \sum_{k=1}^K \ns^{(k)2} + 2\sum_{k=1}^{K-1} \sum_{l=k+1}^K \nsk\nsl$.
\newpage
\subsection{Proof of Proposition 2}

As in the proof for Proposition 1, it suffices to show that if $\zij = \frac{1}{2}(y_i - y_j)^2$, then $\Id$ in (\ref{eq: opt-smerf}) is identical to $\Iv$ in (\ref{eq: opt-reg}).
\begin{align*}
    \Id(\Sc) &= \frac{1}{\ns^2} \sum_{i,j \in \Sc} \zij\\
             &= \frac{1}{\ns^2} \sum_{i,j \in \Sc} \frac{1}{2}(y_i - y_j)^2\\
             &= \frac{1}{\ns^2} \sum_{\substack{i,j \in \Sc\\
                                   i \neq j}} \frac{1}{2}(y_i - y_j)^2\\
             &= \frac{1}{\ns^2} \sum_{\substack{i,j \in \Sc\\
                                   i \neq j}} \frac{1}{2}(y_i^2 - 2y_iy_j + y_j^2)\\
             &= \frac{1}{\ns^2} (\sum_{\substack{i,j \in \Sc\\
                                   i \neq j}} \frac{1}{2}(y_i^2 + y_j^2) - \sum_{\substack{i,j \in \Sc\\
                                   i \neq j}} y_iy_j)\\
              &= \frac{1}{\ns^2} ((\ns-1)\sum_{i \in \Sc} y_i^2 - \sum_{\substack{i,j \in                       \Sc\\
                                   i \neq j}} y_iy_j)\\
              &= \frac{1}{\ns} \sum_{i \in \Sc} y_i^2 - \frac{1}{\ns^2}(\sum_{i \in \Sc} y_i^2 + \sum_{\substack{i,j \in                       \Sc\\
                                   i \neq j}} y_iy_j)\\
              &= \frac{1}{\ns} \sum_{i \in \Sc} y_i^2 - \frac{1}{\ns^2} (\sum_{i \in \Sc} y_i^2 + (\sum_{i \in \Sc} y_i)^2 - \sum_{i \in \Sc} y_i^2)\\
              &= \frac{1}{\ns} \sum_{i \in \Sc} y_i^2 - (\frac{1}{\ns}\sum_{i \in \Sc} y_i)^2\\
              &= \frac{1}{\ns} \sum_{i \in \Sc} y_i^2 - \bar{y}^2\\
              &= \frac{1}{\ns} \sum_{i \in \Sc} y_i^2 - 2\bar{y}^2 + \bar{y}^2\\
              &= \frac{1}{\ns} \sum_{i \in \Sc} (y_i^2 - 2y_i\bar{y} + \bar{y}^2)\\
              &= \frac{1}{\ns} \sum_{i \in \Sc} (y_i - \bar{y})^2\\
              &= \Iv.
\end{align*}
Going from the fourth to the fifth equality we used the identity
\begin{align*}
    \sum_{\substack{i,j \in \Sc\\
                    i \neq j}} (y_i^2 + y_j^2) = 2(\ns-1)\sum_{i \in \Sc} y_i^2,
\end{align*}
and going from the sixth to the seventh equality we used the identity
\begin{align*}
    \sum_{\substack{i,j \in \Sc\\
                    i \neq j}} y_iy_j = (\sum_{i \in \Sc} y_i)^2 - \sum_{i \in \Sc} y_i^2.
\end{align*}

\newpage

\section{Supplementary Results for Section \ref{sec: theory}}
\subsection{Lemma 1 and Corollary 1}
\label{sec: lemma1-cor1}
Recall the Bayes distance predictor
\begin{align*}
    \gb &= \frac{1}{2}(\mb - m^*(X'))^2 + \sigma^2 \\
    &= \tgb + \sigma^2.
\end{align*}
Further recall the estimate defined by (\ref{eq: ghat-line1}):
\begin{align*}
    \hat{g}_n(X,X') &= \frac{1}{2}(m_n(X) - m_n(X'))^2 + s_n \\
    &= \tilde{g}_n(X,X') + s_n.
\end{align*}

We have the following lemma.

\begin{lem}
    Suppose $m_n(X) \xrightarrow{P} \mb$ and $s_n \xrightarrow{P} \sigma^2$. Then $\hat{g}_n(X,X') \xrightarrow{P} \gb$.
\end{lem}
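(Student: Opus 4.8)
The plan is to derive the result purely from the algebraic structure of $\hat{g}_n$ and $g^*$ together with standard stability properties of convergence in probability (the continuous mapping theorem and the fact that $\xrightarrow{P}$ is preserved under sums and differences). Writing $\hat{g}_n(X,X') = \tilde{g}_n(X,X') + s_n$ and $\gb = \tgb + \sigma^2$, and since $s_n \xrightarrow{P} \sigma^2$ is given, it suffices to show $\tilde{g}_n(X,X') = \frac{1}{2}(m_n(X)-m_n(X'))^2 \xrightarrow{P} \frac{1}{2}(\mb - \mbp)^2 = \tgb$. The conclusion then follows because the sum of two sequences converging in probability converges in probability to the sum of their limits.

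First I would record that the hypothesis $m_n(X) \xrightarrow{P} \mb$ applies equally to the query point $X'$. Because $X$ and $X'$ are i.i.d.\ and $m_n$ is the same measurable estimator built from $\tilde{T}_n$, the identical distribution of the two query points gives $m_n(X') \xrightarrow{P} \mbp$ by exactly the same argument. Next I would establish convergence of the difference $m_n(X) - m_n(X')$. The delicate point is that the limits $\mb$ and $\mbp$ are random variables rather than constants, so I cannot simply stack the pair $(m_n(X), m_n(X'))$ and invoke joint convergence. Instead I would use the identity
\[
m_n(X) - m_n(X') - (\mb - \mbp) = \big(m_n(X) - \mb\big) - \big(m_n(X') - \mbp\big),
\]
and observe that each bracket on the right tends to $0$ in probability by hypothesis, hence so does their difference, yielding $m_n(X) - m_n(X') \xrightarrow{P} \mb - \mbp$.

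I would then apply the continuous mapping theorem to the continuous map $t \mapsto \frac{1}{2}t^2$, obtaining $\tilde{g}_n(X,X') \xrightarrow{P} \frac{1}{2}(\mb - \mbp)^2 = \tgb$. Finally, combining this with the given $s_n \xrightarrow{P} \sigma^2$ and once more using stability of convergence in probability under addition gives $\hat{g}_n(X,X') \xrightarrow{P} \tgb + \sigma^2 = \gb$, which is the claim.

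The main obstacle is the conceptual subtlety that convergence is to random limits, so the naive ``stack the components and apply the continuous mapping theorem'' shortcut needs care; the difference-of-errors identity above is precisely what sidesteps the need for joint convergence and makes the argument go through cleanly. The only other point to state explicitly is the transfer of the hypothesis from $X$ to the i.i.d.\ copy $X'$, which relies on $m_n$ being a fixed estimator evaluated at query points of identical distribution.
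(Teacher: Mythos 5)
Your proof is correct, and its skeleton is the same as the paper's: reduce the claim to showing $\tilde{g}_n(X,X') \xrightarrow{P} \tgb$, then use stability of convergence in probability under addition to absorb $s_n \xrightarrow{P} \sigma^2$. The difference is in how the quadratic term is handled. The paper applies the continuous mapping theorem directly to the bivariate map $f(a,b) = \frac{1}{2}(a-b)^2$ at the stacked pair $(m_n(X), m_n(X'))$, and then a second time to the sum map $h(a,b) = a+b$; you instead first prove $m_n(X) - m_n(X') \xrightarrow{P} \mb - \mbp$ via the error decomposition $\bigl(m_n(X) - \mb\bigr) - \bigl(m_n(X') - \mbp\bigr)$ and then apply the univariate CMT to $t \mapsto \frac{1}{2}t^2$. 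Both routes are valid, and yours has the merit of making explicit a step the paper leaves implicit: that the hypothesis $m_n(X) \xrightarrow{P} \mb$ transfers to the identically distributed query point $X'$.

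One correction to your commentary, however: the ``delicate point'' you flag is not actually delicate. For convergence in probability --- unlike convergence in distribution --- componentwise convergence to \emph{random} limits does imply joint convergence of the stacked vector, since
\[
P\bigl(\bigl\|(A_n,B_n)-(A,B)\bigr\| > \epsilon\bigr) \le P\bigl(|A_n - A| > \epsilon/2\bigr) + P\bigl(|B_n - B| > \epsilon/2\bigr) \to 0,
\]
and the CMT for convergence in probability accommodates random limit points. So the paper's direct bivariate application is perfectly legitimate even though $\mb$ and $\mbp$ are random; your difference-of-errors identity is a clean way to avoid invoking this fact, but it sidesteps a non-obstacle rather than a genuine gap in the stacked-pair argument.
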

\begin{proof}
The function $f(a,b) = \frac{1}{2}(a - b)^2$ is continuous in $a$ and $b$. Therefore, it follows from the continuous mapping theorem (CMT) that $\tilde{g}_n(X,X') = f(m_n(X), m_n(X')) \xrightarrow{P} f(\mb, m^*(X')) = \tgb$. Similarly, the function $h(a, b) = a + b$ is continuous in $a$ and $b$. Using the result of the first application of CMT, another application of CMT yields $\hat{g}_n(X,X') = h(\tilde{g}_n(X,X'), s_n) \xrightarrow{P} h(\tgb, \sigma^2) = \gb$. This completes the proof.
\end{proof}

The proof of Lemma 1 leads to the following corollary.

\begin{cor}
Let $m_n(X) \xrightarrow{P} \mb$. Then $\tilde{g}_n(X,X') \xrightarrow{P} \tgb$.
\end{cor}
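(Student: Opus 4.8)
The plan is to recognize that this corollary is precisely the intermediate conclusion reached in the first application of the continuous mapping theorem (CMT) inside the proof of Lemma 1, now isolated as a standalone statement under the weaker hypothesis that only consistency of the regression estimate $m_n$ is assumed (dropping the requirement on $s_n$). Accordingly, I would reuse that same CMT argument verbatim, without needing the second coordinate $h(a,b) = a+b$ that Lemma 1 invokes for the additive $\sigma^2$ term.

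First I would observe that $\tilde{g}_n(X,X')$ is a continuous function of the pair $(m_n(X), m_n(X'))$, namely $\tilde{g}_n(X,X') = f(m_n(X), m_n(X'))$ with $f(a,b) = \frac{1}{2}(a-b)^2$, and that the target has the identical functional form, $\tgb = f(\mb, \mbp)$. Since $f$ is a polynomial, it is continuous everywhere, so the CMT will apply once I have joint convergence in probability of the argument pair to $(\mb, \mbp)$.

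The key step is therefore to establish $(m_n(X), m_n(X')) \xrightarrow{P} (\mb, \mbp)$. The hypothesis supplies $m_n(X) \xrightarrow{P} \mb$ directly. Because $X$ and $X'$ are i.i.d. and $m_n$ is the same estimator evaluated at each query point, the identical consistency statement transfers to the second coordinate, i.e. $m_n(X') \xrightarrow{P} \mbp$. I would then invoke the standard fact that componentwise convergence in probability implies joint convergence in probability of the vector (a property special to convergence in probability, which fails for convergence in distribution), giving the required joint statement.

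With the joint convergence in hand, a single application of the CMT to the continuous map $f$ yields $\tilde{g}_n(X,X') = f(m_n(X), m_n(X')) \xrightarrow{P} f(\mb, \mbp) = \tgb$, which completes the argument. There is no substantial obstacle; the only point requiring any care is the passage from the stated hypothesis about $X$ to the corresponding claim about $X'$, which rests entirely on the i.i.d. assumption governing the sampling of the two query points rather than on any additional analysis of the forest estimator.
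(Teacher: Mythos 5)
Your proposal is correct and follows essentially the same route as the paper: the paper proves this corollary by observing it is exactly the first application of the continuous mapping theorem to $f(a,b)=\frac{1}{2}(a-b)^2$ inside the proof of Lemma 1, with the $s_n$ hypothesis simply unused. Your added care in spelling out that $m_n(X') \xrightarrow{P} \mbp$ (by the i.i.d.\ sampling of the query points) and that componentwise convergence in probability yields joint convergence in probability merely fills in details the paper leaves implicit, rather than constituting a different argument.
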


\subsection{Review of Scornet et al. \cite{scornet2015} Theorem 2 (ST2)}
\label{sec: st2}
\begin{scor}
Assume Equation (\ref{eq: reg}) holds. Let $t_n = a_n$. Then, provided  $a_n \rightarrow \infty$, $t_n \rightarrow \infty$, and $a_nlogn/n \rightarrow 0$, random forests are $\LL^2$ consistent. That is, $\lim\limits_{n \rightarrow \infty} \Exp[(m_{n,\Rf}(X) - \mb)^2] = 0$.
\end{scor}

There is actually another condition which must be satisfied in order to achieve this consistency result. However, it is quite technical and difficult to check in practice, as \cite{scornet2015} states. Therefore, we omit it so not to distract the reader. \cite{scornet2015} notes that consistency holds even if the Gaussian noise term in (\ref{eq: reg}) is replaced with any bounded random variable or constant.

\subsection{Derivation of the Bayes Optimal Distance Predictor $\gb$}
\label{sec: bayes}
We explicitly derive the Bayes distance function $\gb$, which minimizes $L(g) = \Exp[(g(X,X') - Z)^2]$ under assumptions (\ref{eq: reg},\ref{eq: Z}). From the law of total expectation, it suffices to minimize $L(g|X,X') = \Exp[(g(X,X') - Z)^2 | X,X']$ for each $X,X'$ with positive measure. Set the derivative of $L(g|X,X')$ to zero:
\begin{align*}
    0 &= \frac{d}{dg} \Exp[(g(X,X') - Z)^2 | X,X'] \\
    &= \frac{d}{dg} (\Exp[g(X,X')^2 | X,X'] - 2\Exp[g(X,X')Z | X,X'] + \Exp[Z^2 | X,X']) \\
    &= \frac{d}{dg} (g(X,X')^2 - 2g(X,X')\Exp[Z | X,X'] + \Exp[Z^2 | X,X']) \\
    &= 2\gb - 2\Exp[Z | X,X']
\end{align*}
Thus, $\gb = \Exp[Z | X,X']$. Since $\frac{d^2}{dg^2} \Exp[(g(X,X') - Z)^2 | X,X'] = 2 > 0$, $L(g|X,X')$ is strictly convex, and therefore $\Exp[Z | X,X']$ is the global minimizer.

Under assumptions (\ref{eq: reg},\ref{eq: Z}), $\gb$ becomes
\begin{align*}
    \gb &= \Exp[\frac{1}{2}(Y - Y')^2 | X,X'] \\
    &= \frac{1}{2}(\Exp[Y^2 | X,X'] - 2\Exp[YY' | X,X'] + \Exp[Y'^2 | X,X']) \\
    &= \frac{1}{2}(\Exp[Y | X]^2 + Var(Y | X) - 2\Exp[Y | X]\Exp[Y' | X'] + \Exp[Y' | X']^2 + Var(Y' | X')) \\
    &= \frac{1}{2}(\Exp[Y | X] - \Exp[Y' | X'])^2 + \frac{1}{2}(Var(Y | X) + Var(Y' | X')) \\
    &= \frac{1}{2}(\mb - m^*(X'))^2 + \frac{1}{2}(\sigma^2 + \sigma^2) \\
    &= \frac{1}{2}(\mb - m^*(X'))^2 + \sigma^2
\end{align*}

\subsection{Empirical convergence of $s_{n,\Rf}$ to $\sigma^2$}
\label{sec: empirical-convergence}
We noted in Section \ref{sec: theory} that $s_{,\Rf}$ empirically converges to $\sigma^2$. We simulate data from an additive regression model of the form:
\begin{align*}
    Y = X^{(1)2} + X^{(2)2} + \epsilon,
\end{align*}
where $X^{(1)}$ and $X^{(2)}$ are distributed uniformly in $[0,1]$ and $\epsilon \sim N(0, 0.01)$. We train a regression random forest on a random sample of $n$ training points, for $n$ ranging from $2^4$ to $2^16$. We make predictions on a separate random sample of $200$ test points. This is repeated 10 times for each value of $n$. 

We cannot exactly compute the infinite forest estimate $s_{n,\Rf}(\mathbf{x},\mathbf{x'}) = Var_\Theta(m_n(\mathbf{x}) - m_n(\mathbf{x}'))$ for a pair of points, but we can reasonably approximate with many trees. Therefore, we used $1000$ trees, which we deemed sufficient. We compute $s_{n,\Rf}(\mathbf{x},\mathbf{x'})$ for all pairs of points $\mathbf{x},\mathbf{x'}$ in the test set, then take the average over all pairs of test points as the empirical estimate $s_{n,\Rf}$. Figure \ref{fig: sn} shows that $s_{n,\Rf}$ asymptotically approaches $\sigma^2 = 0.01$.

\begin{figure}
\centering
\centerline{\includegraphics[width=0.5\columnwidth,trim={0in 0in 0 0},clip]{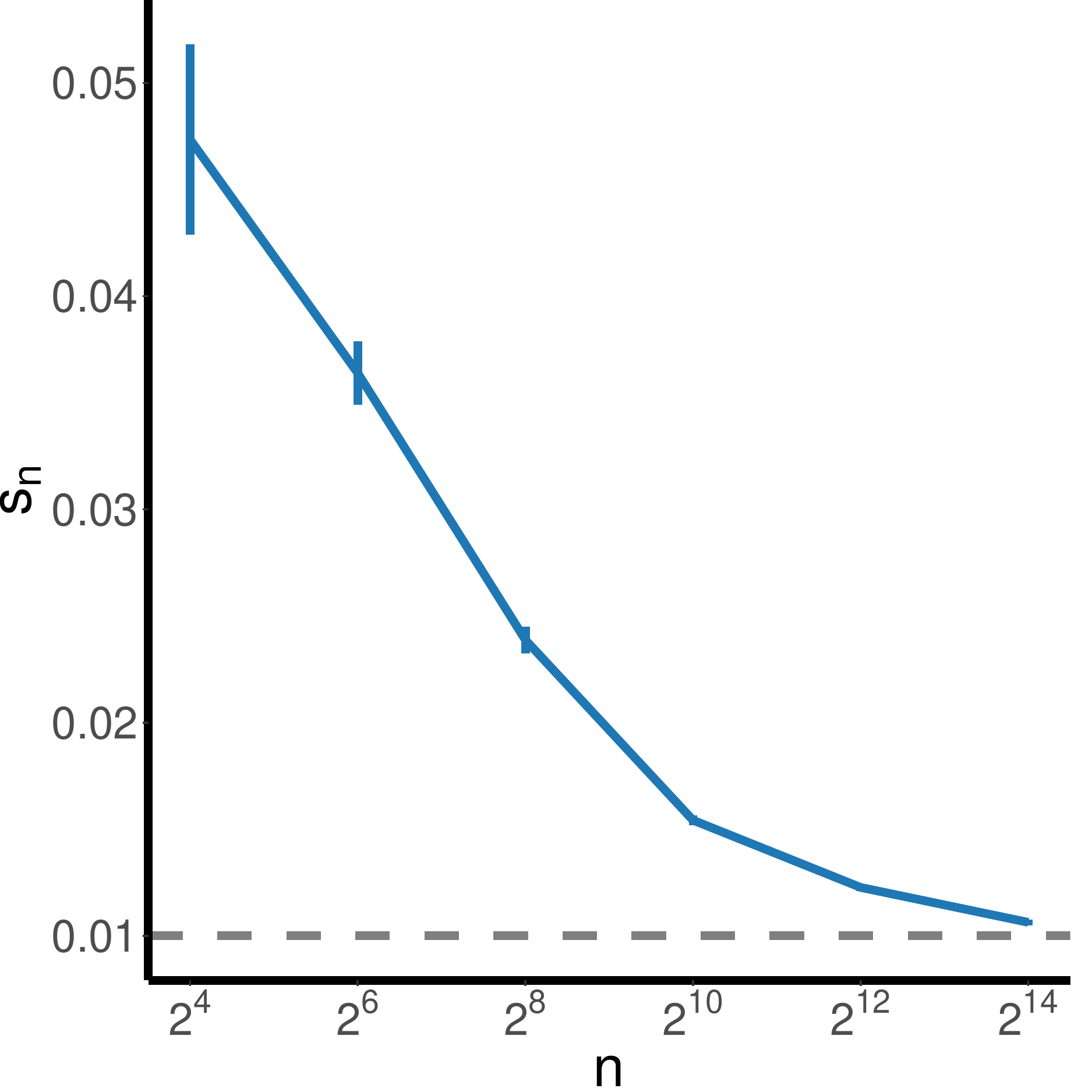}}
\caption{$s_{n,\Rf}$ empirically converges to $\sigma^2 = 0.01$.}
\label{fig: sn}
\end{figure}

\section{\Smerf~Implementation Details}
\Smerf~was built on top of the Sparse Projection Oblique Randomer Forest (SPORF) R package (\url{https://github.com/neurodata/SPORF}). SPORF is an extension of vanilla random forests that allows for arbitrarily-oriented splits, whereas vanilla random forests only splits orthogonal to the input feature axes. SPORF does so by randomly projecting the input from the original space in $\Real^p$ to a new space in $\Real^d$, where $d$ can be greater than $p$. Then, splits are searched along each dimension in the projected space.

One of the options in SPORF is \ff, which specifies whether SPORF should randomly project the data or just randomly subsample input feature dimensions (i.e. vanilla random forest). Performing random projections is done by setting $\ff = \Rmb$, while performing vanilla random forests is done by setting $\ff = \Rmr$.

For the experiments in Section \ref{sec: experiments-simulations}, we set $\ff = \Rmb$. For the experiments in Section \ref{sec: experiments-networks}, we set $\ff = \Rmr$.

\section{Supplementary Information on Experiments in Section \ref{sec: experiments-simulations}}

\subsection{\Ma~and \Bi~Algorithms}
\Ma~learns a distance of the form
\begin{align*}
    \zij = (\xvi - \xvj)^T \Wm (\xvi - \xvj),\quad \Wm \succcurlyeq 0.
\end{align*}
This defines a class of \emph{squared} Mahalanobis distances. Since $\Wm$ is positive semidefinite, it can be factorized as $\Wm = \Am^T \Am$, where $\Am = \Wm^{1/2}$. Therefore, the squared Mahalanobis distance can be expressed as
\begin{align*}
    \zij &= (\xvi - \xvj)^T \Am^T \Am (\xvi - \xvj)\\
         &\defeq (\tilde{\mathbf{x}}_i - \tilde{\mathbf{x}}_j)^T (\tilde{\mathbf{x}}_i - \tilde{\mathbf{x}}_j)\\
\end{align*}
It then follows that learning a (squared) Mahalanobis distance parameterized by the class of symmetric positive semidefinite $\Wm$ is equivalent to learning a linear transformation of the input and applying the (squared) Euclidean distance to the transformed input.

Similarly, \Bi~ learns a similarity of the form
\begin{align*}
    q_{ij} = \xvi^T \Wm \xvj,\quad \Wm \succcurlyeq 0
\end{align*}
This parameterizes a class of bilinear similarities. Again, since $\Wm$ is positive semidefinite, $q_{ij}$ can be expressed as
\begin{align*}
    q_{ij} &= \xvi^T \Am^T \Am \xvj\\
         &\defeq \tilde{\mathbf{x}}_i^T \tilde{\mathbf{x}}_j
\end{align*}
It then follows that learning a bilinear similarity parameterized by the class of symmetric positive semidefinite $\Wm$ is equivalent to learning a linear transformation of the input and applying the inner product to the transformed input.

\subsection{Simulated Data Sets}
\label{sec: simulated-datasets}
\subsubsection{Regression Distance}
We claimed in section \ref{sec: experiments-simulations} that the Regression Distance simulated data set was designed specifically for \Ma~ to perform well. Here we show that the distance $\zij$ defined for this data set is precisely a squared Mahalanobis distance.

The squared Mahalanobis distance between $\mathbf{x}_i$ and $\mathbf{x}_i$ in $\Real^2$ takes the form
\begin{align*}
    z_{ij,\Ma} &= (\mathbf{x}_i - \mathbf{x}_j)^T \Wm (\mathbf{x}_i - \mathbf{x}_j)\\
                 &= \woo(x_i^{(1)} - x_j^{(1)})^2 + \wto(x_i^{(2)} - x_j^{2)})(x_i^{(1)}- x_j^{(1)})\\
                 &+ \wot(x_i^{(1)} - x_j^{(1)})(x_i^{(2)} - x_j^{(2)}) + \wtt(x_i^{(2)} - x_j^{(2)})^2\\
                 &= \woo(x_i^{(1)} - x_j^{(1)})^2 + 2\wot(x_i^{(1)} - x_j^{(1)})(x_i^{(2)} - x_j^{(2)}) + \wtt(x_i^{(2)} - x_j^{(2)})^2
\end{align*}
where $w_{ij}$ is the $(i,j)^{th}$ element of $\Wm$ and $w_{ij} = w_{ji}$.

Now, consider the noiseless version of (\ref{eq: addreg})
\begin{align}
    \label{eq: addreg-noiseless}
    y_i = \frac{1}{2}(x_i^{(1)} + x_i^{(2)}),
\end{align}
and recall that we defined the pairwise distance for the Regression Distance data set to be $z_{ij,RegDist} = (y_i - y_j)^2$. Then
\begin{align*}
    z_{ij,RegDist} &= (y_i - y_j)^2\\
               &= (\frac{1}{2}(x_i^{(1)} + x_i^{(2)}) - \frac{1}{2}(x_j^{(1)} + x_j^{(2)}))^2\\
               &= \frac{1}{4}((x_i^{(1)} - x_j^{(1)}) + (x_i^{(2)} - x_j^{(2)}))^2\\
               &= \frac{1}{4}((x_i^{(1)} - x_j^{(1)})^2 + \frac{1}{2}(x_i^{(1)} - x_j^{(1)})(x_i^{(2)} - x_j^{(2)}) + \frac{1}{4}(x_i^{(2)} - x_j^{(2)})^2
\end{align*}
Comparing $z_{ij,RegDist}$ to $z_{ij,\Ma}$, we see that $z_{ij,RegDist}$ is a squared Mahalanobis distance with $\woo = \wot = \wtt = \frac{1}{4}$.

\subsubsection{Bilinear Distance}
Similarly, we claimed that the aptly named Bilinear Distance simulated data set was designed specifically for \Bi~ to perform well. Here we show that the similarity $q_{ij}$ (for which the distance is equal to 1 minus the similarity) defined for this data set is, in fact, a bilinear similarity.

The bilinear similarity between $\mathbf{x}_i$ and $\mathbf{x}_j$ in $\Real^2$ takes the form
\begin{align*}
    q_{ij,\Bi} &= \mathbf{x}_i^T \Wm \mathbf{x}_j\\
                    &= \woo x_i^{(1)}x_j^{(1)} + \wto x_i^{(2)}x_j^{(1)} + \wot x_i^{(1)} x_j^{(2)} + \wtt x_i^{(2)}x_j^{(2)}.
\end{align*}

Recall that $y_i$ is defined as in (\ref{eq: addreg-noiseless}), and that we defined the pairwise similarity for the Bilinear Distance data set to be
\begin{align*}
    q_{ij,BiDist} &= y_i y_j\\
                  &= (\frac{1}{2}(x_i^{(1)} + x_i^{(2)})) (\frac{1}{2}(x_j^{(1)} + x_j^{(2)}))\\
                  &= \frac{1}{4} x_i^{(1)}x_j^{(1)} + \frac{1}{4} x_i^{(2)}x_j^{(1)} + \frac{1}{4} x_i^{(1)}x_j^{(2)} + \frac{1}{4} x_i^{(2)}x_j^{(2)}
\end{align*}
Comparing $q_{ij,BiDist}$ to $q_{ij,\Bi}$, we see that $q_{ij,BiDist}$ is a bilinear similarity with $\woo = \wto = \wot = \wtt = \frac{1}{4}$.

\subsubsection{Radial Distance}
The pairwise distance defined for the Radial Distance simulated data set is neither a squared Mahalanobis distance nor a Bilinear distance. Therefore, neither the \Ma~nor the \Bi~algorithms possess the capacity to learn such a distance from data. However, \Smerf~is a nonparametric, locally adaptive algorithm, and therefore has the expressive capacity for learning such a distance (in addition to being able to learn squared Mahalanobis and bilinear distances).

We recall that each $\mathbf{x}_i$ is uniformly distributed within the 20-dimensional centered unit ball. We defined the distance between $\mathbf{x}_i$ and $\mathbf{x}_j$ as
\begin{align}
    \zij = (|\mathbf{x}_i^{(1:2)}| - |\mathbf{x}_j^{(1:2)}|)^2
\end{align}
This distance can be seen as the squared difference between points along a one-dimensional manifold, where the manifold is a line segment from the center to the boundary of the 2-dimensional unit disk. A graphical depiction of this data set is shown in Figure \ref{fig: radial-distance}.

\begin{figure}[t]
\centering
\centerline{\includegraphics[width=0.5\columnwidth,trim={0in 0in 0 0},clip]{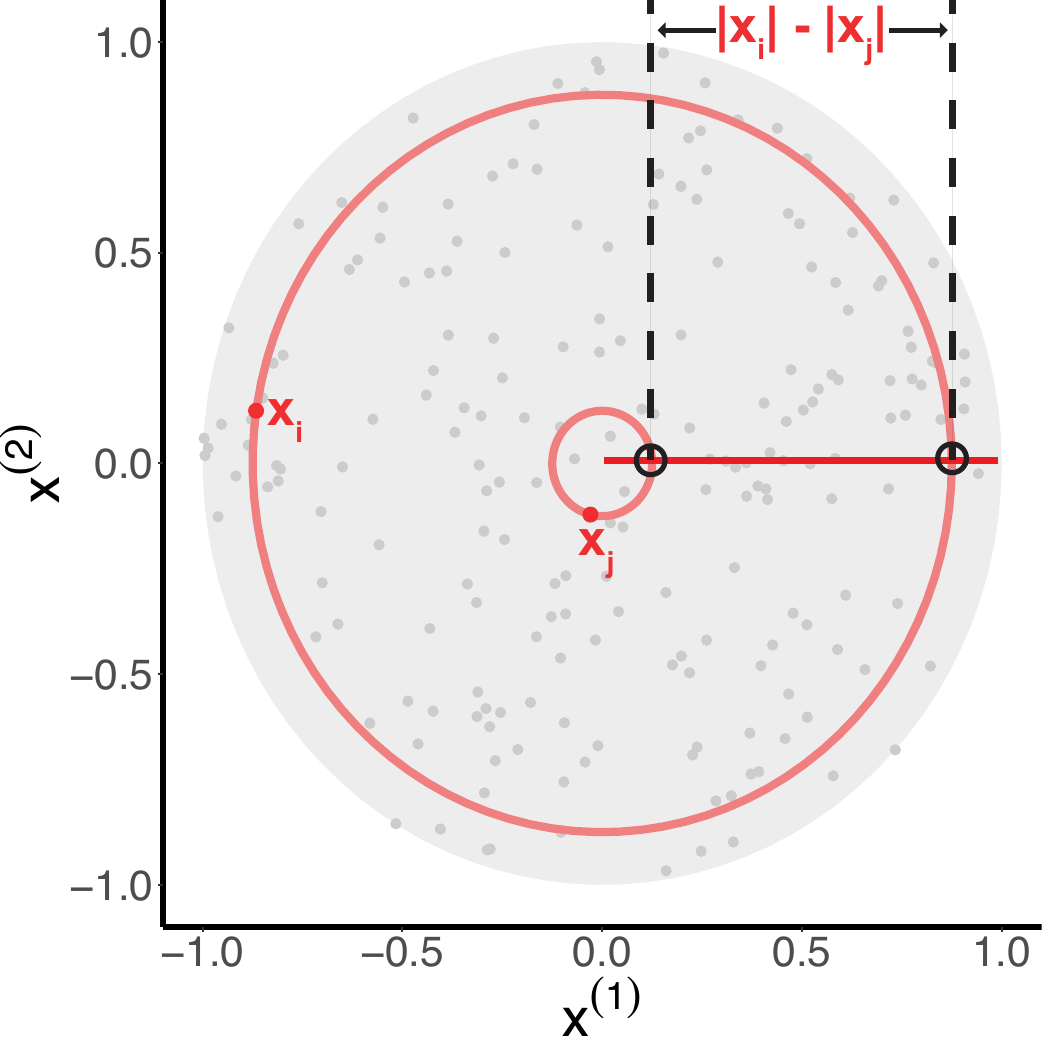}}
\caption{Graphical depiction of the first two dimensions of the Radial Distance data set. The distance between $\mathbf{x}_i$ and $\mathbf{x}_j$ is defined as $\zij = (|\mathbf{x}_i^{(1:2)}| - |\mathbf{x}_j^{(1:2)}|)^2$.}
\label{fig: radial-distance}
\end{figure}

\subsection{Details of Algorithm Implementation and Hyperparameters}
\label{sec: sim-hyper}
\Smerf~was implemented using in-house native R code. The number of trees for all experiments was 500. Each tree was trained on a random bootstrap sample of the original data. Recall that $p$ is the dimensionality of the input space. Two hyperparameters were tuned (the values tried for each hyperparameter are in parentheses):
\begin{itemize}
    \item \dd: the number of random dimensions to try splitting on at each split node ($p^{1/4}, p^{1/2}, p^{3/4}, p, p^{3/2}$)\\
    \item \mm: the minimum number of training points that must be in a node in order to attempt splitting; this is a stopping criteria ($2,4,8$)
\end{itemize}

The best pair of hyperparameter values was selected based on the out-of-bag RMSE. The trained forest using this selected pair of hyperparameter values was then used for prediction on the test set. We note that it is common for classification and regression forests to use the out-of-bag error as a proxy for cross-validation error.

\Ma~ and \Bi~ were implemented in Matlab. The optimization problems described for \Ma~and \Bi~in Section \ref{sec: experiments-simulations} were solved using the CVX convex optimization software package. Symmetric and positive semidefinite constraints were placed on $\Wm$ for both \Ma~and \Bi. The Mosek solver was used, with default solver settings.

\subsection{Estimation of Feature Importance on the Radial Distance Data Set}
\label{sec: feature-importance}

Importances of each of the 20 dimensions for the Radial Distance data set were estimated using a generalization of the standard Gini importance for classification \cite{randomforest}. Specifically, for a given feature, the importance is defined as the sum of the maximized objective function in Eq.~\ref{eq: opt-smerf} over all splits in the forest made on that particular feature. Thus, how frequently a split is made on that feature, as well as how extensively a split made on that feature reduces the average pairwise distance, dictate its importance estimate. Figure \ref{fig: feature-importance} shows the feature importance estimates (normalized by the maximum) from \Smerf~for the Radial Distance data set with 320 training examples. \Smerf~correctly identifies the first two dimensions.

\begin{figure}
\centering
\centerline{\includegraphics[width=0.8\columnwidth,trim={0in 0in 0 0},clip]{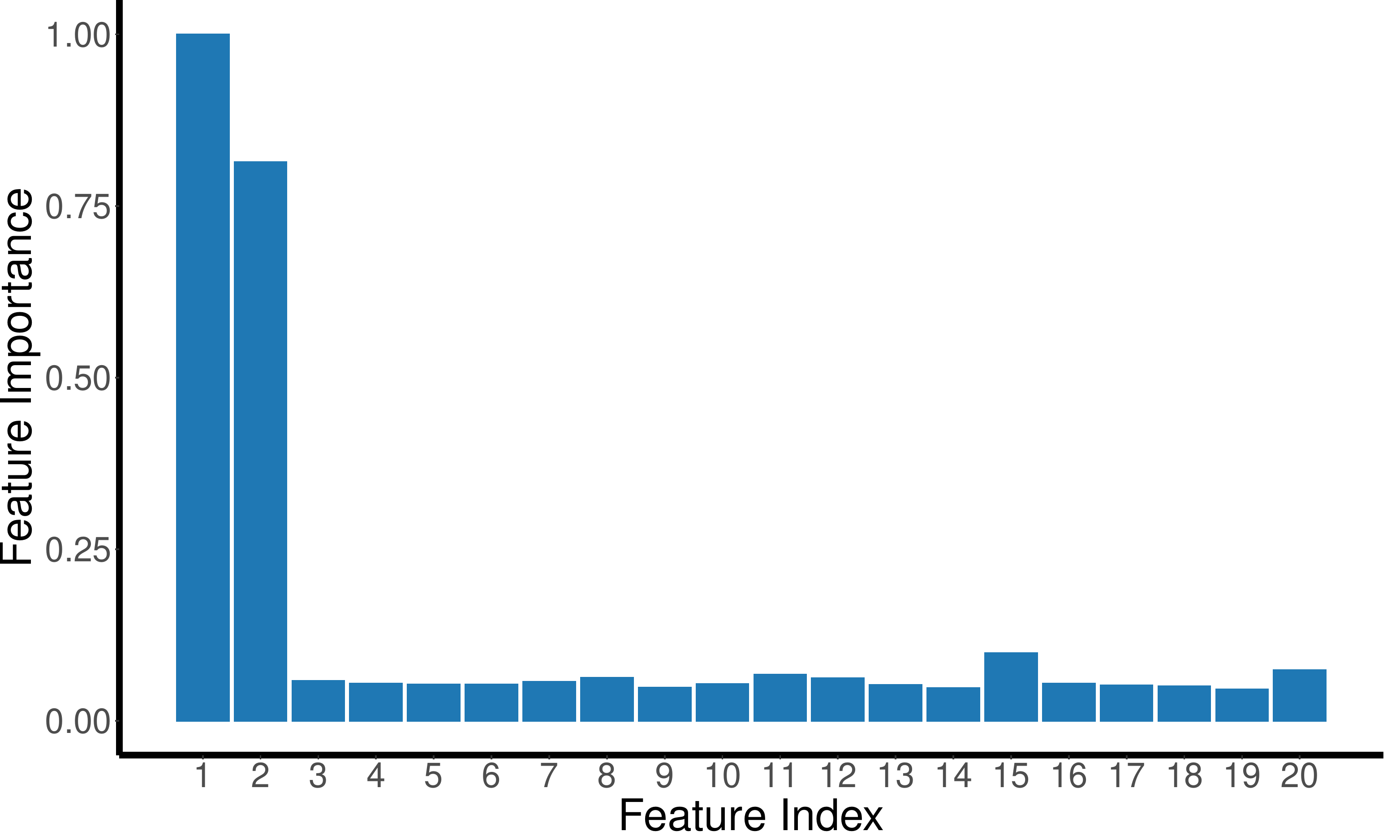}}
\caption{\Smerf~correctly identifies the first two dimensions as the important dimensions for the Radial Distance data set ($n = 320$).}
\label{fig: feature-importance}
\end{figure}

\section{Supplementary Information on Experiments in Section \ref{sec: experiments-networks}}
\label{sec: network-supplementary}

\subsection{Network Data Sets}
\label{sec: network-datasets}
We used the Lazega-cowork, Facebook-ego, and NIPS234 data sets from the github page for \Narm~\cite{zhao2017} (\url{https://github.com/ethanhezhao/NARM}). The Lazega-cowork data set used in \cite{zhao2017} is different from the original Lazega-cowork data set (\url{https://www.stats.ox.ac.uk/~snijders/siena/Lazega_lawyers_data.htm}). The original data set has a combined total of eight binary and ordinal node attributes. Since \Narm~and \Epm~can only handle binary attributes, \cite{zhao2017} encode the ordinal attributes into an expanded set of binary attributes, resulting in 18 binary attributes. We run \Smerf~on the original eight-attribute Lazega-cowork data set because \Smerf~can handle ordinal features, which is a benefit of our method.

\subsection{Training and Testing Partitioning}
As noted in Section \ref{sec: experiments-networks}, the proportion of data used for training was varied from 0.1 to 0.9 by an increment of 0.1. The word \emph{data} here has slightly different meanings for \Epm, compared to \Smerf~and \Narm. For \Smerf~and \Narm, the data was split into a set of training and testing \emph{nodes}. We noted that \Epm~fails at predicting links for entirely new nodes that were not seen at training because the model does not utilize node-attribute information. This is known as the \emph{cold-start} problem in relational learning and collaborative filtering. Therefore, for \Epm, the data was split into training and testing \emph{node pairs}, rather than training and testing \emph{nodes}. Splitting by node pairs is the partitioning scheme used by the authors of \Epm~\cite{zhou2015} and \Narm~\cite{zhao2017}. Let's refer to the \emph{node}-wise partitioning scheme as scheme-1, and the \emph{node pair}-wise partitioning scheme as scheme-2. We note that \Smerf~is incompatible with partitioning scheme-2 because this amounts to observing an adjacency matrix with missing entries, which \Smerf~currently cannot handle; handling of missing entries in adjacency/distance matrices is under development.

Because the task is link prediction, which is a \emph{pairwise} prediction, we wanted to keep the number of training \emph{node pairs} the same for both data partitioning schemes. Note that if there are $n$ nodes, then there are $(n^2 - n)/2$ node pairs. Denote by $f$ the proportion of nodes used for training using scheme-1. Thus if we sample $fn$ training nodes using scheme-1, then we sample $(fn)^2 - fn)/2$ training node pairs using scheme-2. The x-axis in Figure \ref{fig: net} represents $f$.


\subsection{Details of Algorithm Implementation and Hyperparameters}
\Narm~and \Epm~implementations were those hosted in the Github repository for \Narm.

The hyperparameters tuned in \Smerf~were the same as described in Section \ref{sec: sim-hyper}, except the hyperparameter $\dd \in \{p^{1/4}, p^{1/2}, p^{3/4}, p\}$. Selection of the best hyperparameter for AUC-ROC was based on the out-of-bag AUC-ROC, while selection of the bet hyperparameter for AUC-PR was based on the out-of-bag AUC-PR.

\Narm~and \Epm~have one hyperparameter: the truncation level $K_{max}$. The value for both algorithms was 50, 100, and 100 for the Lazega-cowork, Facebook-ego, and NIPS234 data sets, respectively. These were the settings noted in \cite{zhao2017} as being best.

\end{appendices}

\end{document}